\newcommand{\argmin}{\mathop{\rm arg~min}\limits}
\newtheorem{definition}{Definition}
\newtheorem{lemma}{Lemma}
\newtheorem{theorem}{Theorem}
\newtheorem{corollary}{Corollary}
\newtheorem{proof}{Proof}
\def\qed{\hfill $\Box$}
\title{Fast Regularized Discrete Optimal Transport with Group-Sparse Regularizers}
\author{
    %Authors
    % All authors must be in the same font size and format.
    Yasutoshi Ida\textsuperscript{\rm 1},
    Sekitoshi Kanai\textsuperscript{\rm 1},
    Kazuki Adachi\textsuperscript{\rm 1},
    Atsutoshi Kumagai\textsuperscript{\rm 1},
    Yasuhiro Fujiwara\textsuperscript{\rm 2}
}
\title{My Publication Title --- Single Author}
\author {
    Author Name
}
\title{My Publication Title --- Multiple Authors}
\author {
    % Authors
    First Author Name,\textsuperscript{\rm 1,\rm 2}
    Second Author Name, \textsuperscript{\rm 2}
    Third Author Name \textsuperscript{\rm 1}
}
\begin{document}

\maketitle

\begin{abstract}
Regularized discrete optimal transport (OT) is a powerful tool to measure the distance between two discrete distributions that have been constructed from data samples on two different domains. While it has a wide range of applications in machine learning, in some cases the sampled data from only one of the domains will have class labels such as unsupervised domain adaptation. In this kind of problem setting, a group-sparse regularizer is frequently leveraged as a regularization term to handle class labels. In particular, it can preserve the label structure on the data samples by corresponding the data samples with the same class label to one group-sparse regularization term. As a result, we can measure the distance while utilizing label information by solving the regularized optimization problem with gradient-based algorithms. However, the gradient computation is expensive when the number of classes or data samples is large because the number of regularization terms and their respective sizes also turn out to be large. This paper proposes fast discrete OT with group-sparse regularizers. Our method is based on two ideas. The first is to safely skip the computations of the gradients that must be zero. The second is to efficiently extract the gradients that are expected to be nonzero. Our method is guaranteed to return the same value of the objective function as that of the original method. Experiments show that our method is up to 8.6 times faster than the original method without degrading accuracy.
\end{abstract}

\section{Introduction}
\label{introduction}
Regularized discrete optimal transport (OT) is a powerful tool to compute the distance between two discrete probability distributions that have been constructed from data samples on two different domains.
It seeks a map, called a transportation plan, for moving the probability mass of one distribution to that of another distribution with the cheapest cost.
Once the transportation plan is obtained, the data samples on one domain can be transported to those on another domain,
and their distance can be computed as the cost based on the transportation plan.
Owing to its theoretical foundations and desirable properties, 
it has received much attention in various fields, including shape recognition \cite{gangbo2006shape}, color transfer \cite{pitie2007autimated}, domain adaptation \cite{courty2017optimal}, and human activity recognition \cite{wang2021cross}.
\renewcommand{\thefootnote}{\fnsymbol{footnote}}
\footnote[0]{This is an extended version of the paper accepted by the 37th AAAI Conference on Artificial Intelligence (AAAI 2023).}
\renewcommand{\thefootnote}{\arabic{footnote}}

When we construct two distributions from data samples of two different domains on which we want to use regularized discrete OT, in some cases the sampled data from only one of the domains will have class labels, e.g., samples in the target domain are unlabeled, but labeled samples in similar domains can be observed.
Such situations are frequently encountered in unsupervised/semi-supervised domain adaptation, which has received much attention in the machine-learning community \cite{courty2017optimal}.
Although it is expected that utilizing not only data samples but also their class labels to obtain the transportation plan will improve the performance of such applications, the plain regularized discrete OTs, such as entropy regularized OT \cite{cuturi2013sinkhorn}, cannot handle class labels.
To incorporate the information of the class labels into the transportation plan, a group-sparse regularizer is frequently used as a regularization term \cite{courty2014domain,courty2017optimal,redko2017theoretical,blondel2018smooth,das2018sample,das2018unsupervised,alaya2019screening,li2020optimal,wang2021cross}.
In this approach, one group-sparse regularization term corresponds to one class label and the regularizer induces group-sparsity in the transportation plan.
Specifically, this regularizer considers data samples with the same label as one group and restricts the transportation plan so that the samples tend to be transported to the same data sample on another domain (Figure~\ref{fig:toy}).
Since this approach preserves the structure of the label information on the data samples during the transportation, the group-sparse regularizer can effectively handle the problem of discrete OT with class labels.

While the discrete OT with a group-sparse regularizer is crucial for handling class labels, it incurs a high computation cost when solving the regularized optimization problem with a gradient-based algorithm, especially when there are large numbers of class labels \cite{olga2015imagnet} or data samples in each class \cite{luca2017scaling}.
This is because the gradient-based algorithm must compute gradient vectors corresponding to the regularization terms.
Namely, when the numbers of class labels and number of data samples in each class increase, the number of gradient vectors and their sizes also increase.
In addition, since the existing algorithms, such as L-BFGS \cite{blondel2018smooth}, need to iteratively compute these gradient vectors to update parameters until convergence,
the computation cost of the gradient computation becomes dominant in the total computation cost as the number of class labels or data samples in each class increases.
\begin{figure}[!t]
  \centering
  \subfigure{\includegraphics[width=0.49\linewidth]{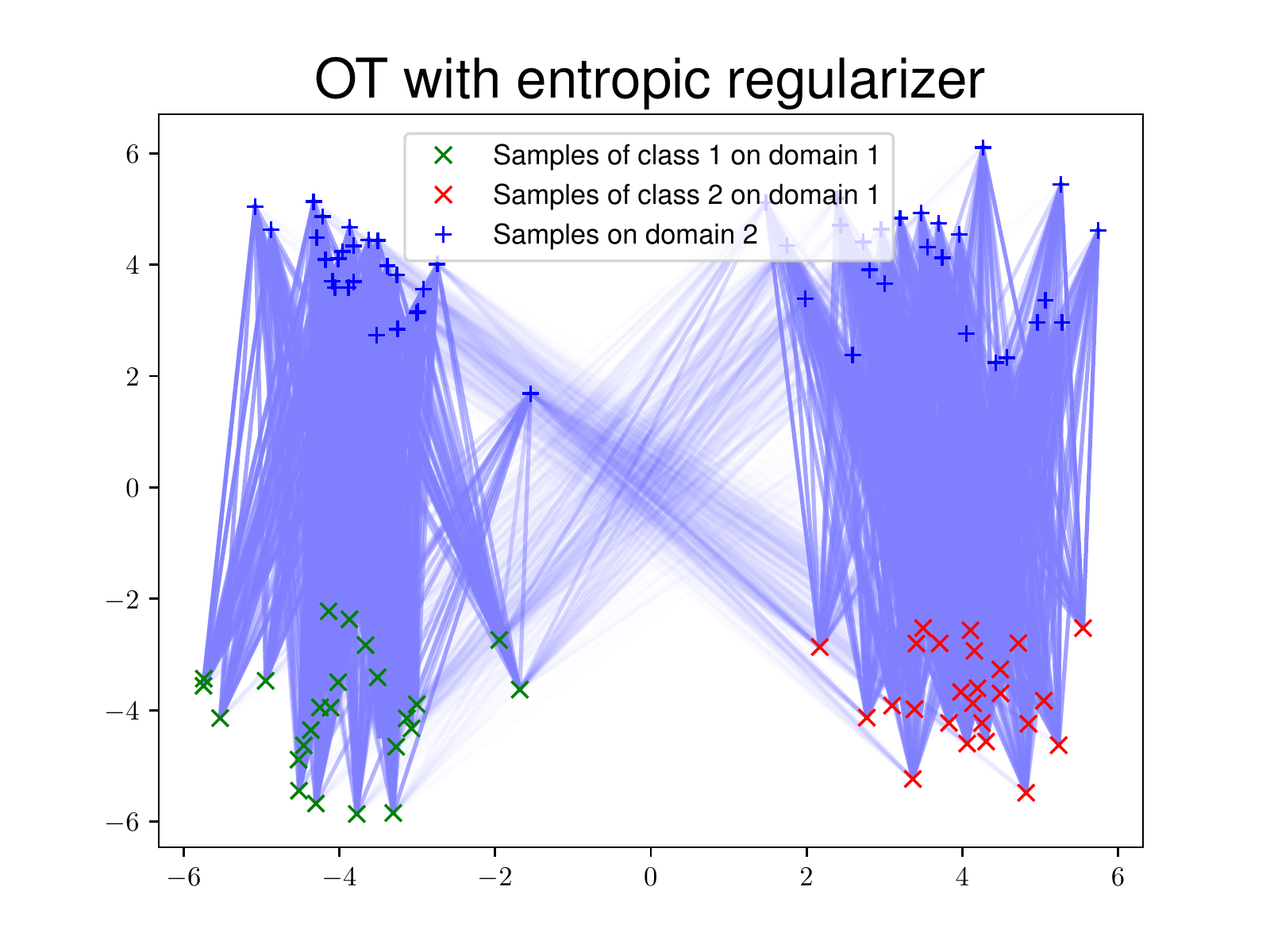}}
  \subfigure{\includegraphics[width=0.49\linewidth]{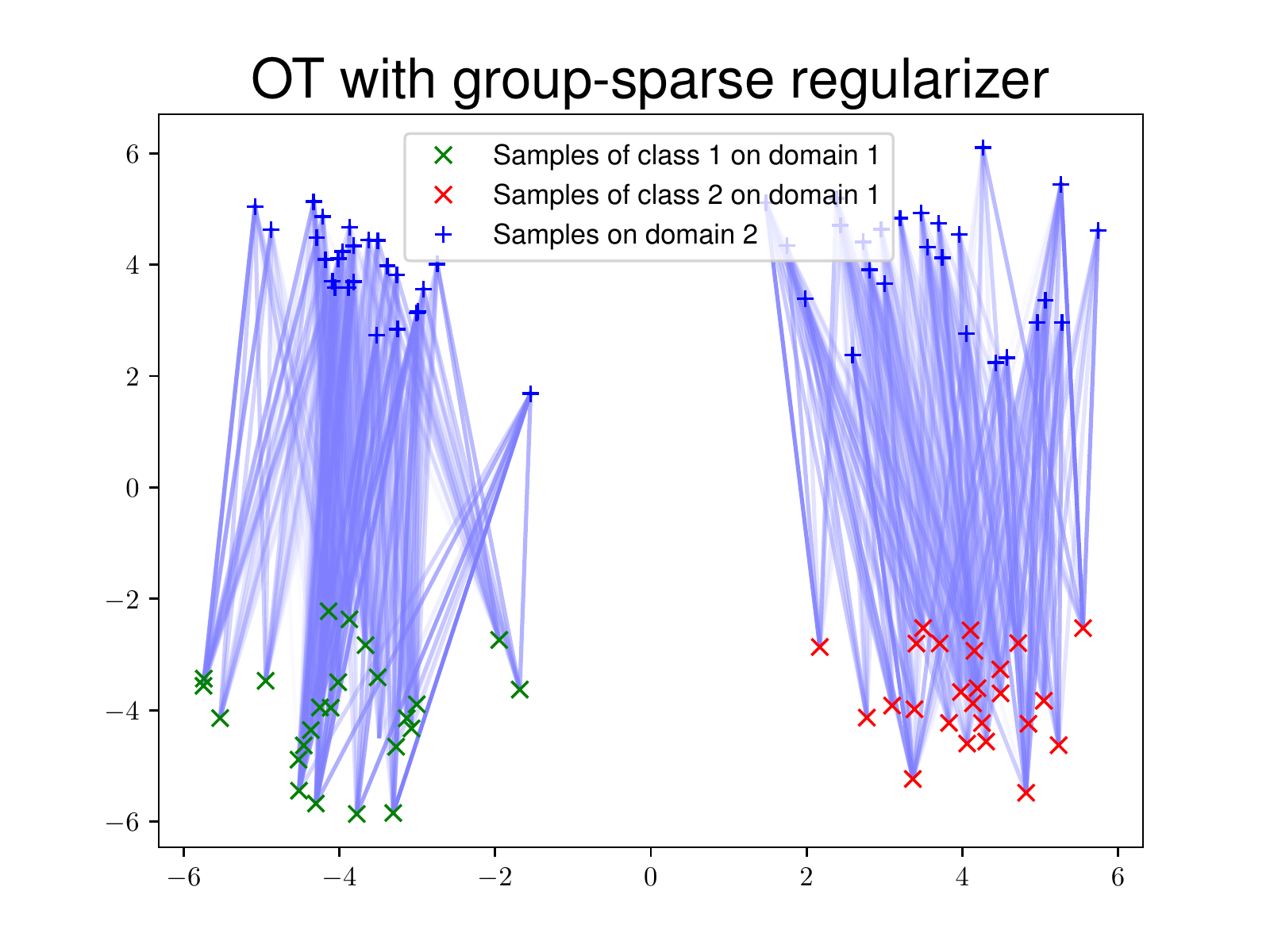}}
  \caption{Map of transportation between samples on domain 1 with two classes and those with two clusters on domain 2 for OT with entropic (left) and group-sparse (right) regularizers. In the left figure, a sample on domain 2 is transported from samples with different class labels, whereas it is transported from the same class in the right figure. These figures suggest that the group-sparse regularizer effectively preserves label information via the structured sparse transportation plan.}
  \label{fig:toy}
\end{figure}

This paper proposes a fast algorithm for discrete OT with a group-sparse regularizer.
To reduce the processing time, we utilize the observation that a large number of gradient vectors of the regularization terms turn out to be zero vectors during optimization.
This is because the gradient vectors are computed with a soft-thresholding function such as the one used in Group Lasso \cite{yuan2006model,ida2019fast,ida2020fast}, which induces sparsity in the gradient vectors.
Based on the observation, our method leverages two ideas.
The first idea is to safely skip gradient computations for groups whose gradient vectors must be zero vectors.
A gradient vector for a group can be quickly checked to be a zero vector or not by approximately computing the soft-thresholding function.
If it is determined to be a zero vector, our method skips the gradient computation; otherwise, it exactly computes the gradient vector.
Here, when computing the exact gradient vector for the latter case, the checking procedure is an extra overhead.
To alleviate this overhead, we introduce the second idea; we construct a subset of groups whose gradient vectors turn out to be nonzero vectors during optimization.
We can reduce the overhead of the first idea by computing gradient vectors without the checking procedure for the specified set.
Theoretically, our method is guaranteed to return the same value of the objective function as the original method.
Experiments show that our method is up to 8.6 times faster than the original method without degrading accuracy.

\textbf{Notation.}
We denote scalars, vectors, and matrices with lower-case, bold lower-case and upper-case letters, \textit{e.g.}, \(t\), \(\bm{t}\) and \(T\), respectively.  
Given a matrix \(T\), we denote its elements by \(t_{i, j}\) and its columns by \(\bm{t}_{j}\).
Given a vector \(\bm{t}\), we denote its elements in the \(l\)-th group by \(\bm{t}_{[l]}\) when \(\bm{t}\) decomposes over groups \(l\in \mathcal{L}\) where \(\mathcal{L}\) is the set of group indices.
\(\bm{1}_{m}\) represents an \(m\)-dimensional vector whose elements are ones.
We define \([\bm{x}]_+ := \max(\bm{x}, \bm{0})\) and  \([\bm{x}]_{-} := \min(\bm{x}, \bm{0})\), performed element-wise.

\section{Preliminary}
First, we explain the transportation problem between two discrete probability distributions with label information.
Next, we describe discrete OT with group-sparse regularizers that can handle label information.
Finally, we introduce the smooth relaxed dual formulation of the discrete OT problem that is the key formulation to our method.

\subsection{Transportation Problem with Label Information}
We introduce two sets of data samples \(\bm{X}_{S}=\{\bm{x}^{(i)}_{S}\in\mathbb{R}^{d}\}_{i=1}^{m}\) associated with class labels \(\bm{Y}_{S}=\{y^{(i)}_{S}\in \mathcal{L}\}_{i=1}^{m}\) and unlabeled samples \(\bm{X}_{T}=\{\bm{x}^{(i)}_{T}\in\mathbb{R}^{d}\}_{i=1}^{n}\).
Note that \(\mathcal{L}=\{1,...,|\mathcal{L}|\}\) is the set of class labels.

Let us consider the transportation problem from \(\bm{X}_{S}\) with \(\bm{Y}_{S}\) to \(\bm{X}_{T}\) by following the previous work \cite{courty2017optimal}.
There, one first constructs two discrete probability distributions from the data samples
\(\textstyle{\bm{a} = \frac{1}{m}\sum_{i=1}^{m} \delta_{\bm{x}_{S}^{(i)}}}\) and \(\textstyle{\bm{b} = \frac{1}{n}\sum_{i=1}^{n} \delta_{\bm{x}_{T}^{(i)}}}\)
where \(\delta_{\bm{x}^{(i)}}\) is the Dirac delta function at location \(\bm{x}^{(i)}\).
These distributions have masses of \(1/m\) and \(1/n\) for each \(\delta_{\bm{x}_{S}^{(i)}}\) and \(\delta_{\bm{x}_{T}^{(i)}}\), respectively.

Discrete OT seeks a map of the minimal cost in moving the mass of \(\bm{a}\) to that of \(\bm{b}\).
As a result, we can transport \(\bm{X}_{S}\) to \(\bm{X}_{T}\) by using the map and compute the distance from the map as the cost.
In Kantorovich's formulation, discrete OT is cast as a linear program (LP), as follows:
\begin{eqnarray}
\label{eq:lp}
\textstyle{\min_{T \in \mathcal{U}(\bm{a},\bm{b})} \langle T, C \rangle_{F}}
\end{eqnarray}
where \(\langle \cdot, \cdot \rangle_{F}\) is the Frobenius dot product and \(\mathcal{U}(\bm{a},\bm{b}) := \left\{T \in \mathbb{R}^{m \times n}_+ \colon T \bm{1}_n = \bm{a}, T^{\top} \bm{1}_m = \bm{b} \right\}\) is the transportation polytope.
\(T\) is a map called the transportation plan and \(C \in \mathbb{R}_+^{m \times n}\) is a cost matrix.
We set \(c_{i, j}:=|\!|\bm{x}^{(i)}_{S}-\bm{x}^{(j)}_{T}|\!|_{2}^{2}\) as in the previous work \cite{courty2017optimal}.
The objective value after solving Problem~(\ref{eq:lp}) is the distance between the two discrete probability distributions.
From here, we treat the samples as matrices, like \(X^{S}\in\mathbb{R}^{m \times d}\), \(\bm{y}^{S}\in\mathbb{R}^{m}\) and \(X^{T}\in\mathbb{R}^{n \times d}\) that correspond to \(\bm{X}_{S}\), \(\bm{Y}_{S}\) and \(\bm{X}_{T}\), respectively.
In this case, when we obtain a transportation plan \(\hat{T}\) by solving the LP problem, \(X^{S}\) is transported to \(X^{T}\) as \(n\hat{T}^{\top}X^{S}\).
However, this formulation cannot handle class labels \(\bm{y}^{S}\), which may improve accuracy of some applications, such as unsupervised domain adaptation.
The next section describes how to find a transportation plan via discrete OT while considering class labels.

\subsection{Discrete OT with Group-sparse Regularizer}
To handle class labels \(\bm{y}^{S}\), a group-sparse regularizer \(\Psi(\cdot)\) \cite{blondel2018smooth} can be incorporated in Problem~(\ref{eq:lp}) as follows:
\begin{eqnarray}
\label{eq:primal_gl}
\min_{T \in \mathcal{U}(\bm{a},\bm{b})} \langle T, C \rangle_{F}+ \textstyle{\sum_{j=1}^n} \Psi(\bm{t}_{j}),
\end{eqnarray}
where
\begin{eqnarray}
\label{eq:reg_gl}
\textstyle{\Psi(\bm{t}_{j})=\gamma(\frac{1}{2} |\!|\bm{t}_{j}|\!|_{2}^{2} + \mu \sum_{l\in\mathcal{L}}|\!|\bm{t}_{j[l]}|\!|_{2}).}
\end{eqnarray}
\(\gamma > 0\) and \(\mu > 0\) are hyperparameters for the regularization terms.
\(\bm{t}_{j[l]}\) is a vector containing elements corresponding to the labeled data associated with the class label \(l\in\mathcal{L}\).
The term \(|\!|\bm{t}_{j[l]}|\!|_{2}\) can be expected to make \(\bm{t}_{j[l]}\) a zero vector, as in Group Lasso~\cite{yuan2006model}.
Therefore, we can expect the transportation plan \(T\) to be a sparse one reflecting the group structure, which corresponds to the structure of data samples with class labels.
Specifically, this regularizer considers data samples \(X^{S}\) with the same label as one group and restricts the transportation plan \(T\) so that they tend to be transported to the same data sample of \(X^{T}\) (Figure~\ref{fig:toy}).
Although solving Problem~(\ref{eq:primal_gl}) does not seem easy, the next section shows that it can easily be solved by considering the smooth relaxed dual of Problem~(\ref{eq:primal_gl}).

\subsection{Smooth Relaxed Dual Formulation}
The smooth relaxed dual of Problem~(\ref{eq:primal_gl}) is as follows \cite{blondel2018smooth}:
\begin{eqnarray}
\label{eq:smoothed_dual}
\max_{\bm{\alpha} \in \mathbb{R}^m, \bm{\beta} \in \mathbb{R}^n} \bm{\alpha}^{\top} \bm{a} + \bm{\beta}^{\top} \bm{b} - \textstyle{\sum_{j=1}^{n}} \psi(\bm{\alpha} + \beta_{j} \bm{1}_{m} - \bm{c}_{j}).
\end{eqnarray}
In the above problem, \(\psi(\bm{f}):=\sup_{\bm{g}\geq 0} \bm{f}^{\top}\bm{g}-\Psi(\bm{g})\) is the convex conjugate of \(\Psi(\cdot)\).
Specifically, it is computed as \(\psi(\bm{f})=\bm{f}^{\top}\bm{g}^{\star}-\Psi(\bm{g}^{\star})\),
where \(\bm{g}^{\star}\) decomposes over groups \(l\in\mathcal{L}\) and equals:
\begin{eqnarray}
\label{eq:gradient}
\bm{g}^\star_{[l]} &=& \argmin_{\bm{g}_{[l]}} \textstyle{\frac{1}{2}} |\!|\bm{g}_{[l]} - \bm{f}_{[l]}^{+}|\!|_{2}^{2} + \mu |\!|\bm{g}_{[l]}|\!|_{2} \nonumber \\
 &=& [1 - \mu/|\!|\bm{f}_{[l]}^{+}|\!|_{2}]_{+} \bm{f}_{[l]}^{+}=\nabla\psi(\bm{f})_{[l]}.
\end{eqnarray}
In the above equations, \(\textstyle{\bm{f}^{+} = \frac{1}{\gamma} [\bm{f}]_{+}}\) and
the third equation is derived from the definition of \(\psi(\cdot)\) and Danskin's theorem \cite{bertsekas1999nonlinear}.
The optimal transportation plan \(T^{\star}\) of Problem~(\ref{eq:primal_gl}) can be recovered from the optimal solutions \(\bm{\alpha}^{\star}\) and \(\bm{\beta}^{\star}\) by computing
\(
\bm{t}_{j}^{\star} = 
\nabla \psi(\bm{\alpha}^{\star} + \beta_{j}^{\star} \bm{1}_{m} - \bm{c}_{j})
\)
for all \(j\in \{1,...,n\}\).
Problem~(\ref{eq:smoothed_dual}) is a differentiable and concave optimization problem without hard constraints.
% like \(\mathcal{U}(\bm{a},\bm{b})\) in Problem~(\ref{eq:primal_gl}).
In addition, we can compute the gradient \(\nabla \psi(\bm{\alpha} + \beta_{j} \bm{1}_{m} - \bm{c}_{j})\) in a closed-form expression by using Equation~(\ref{eq:gradient}).
Therefore, we can use gradient-based algorithms, such as L-BFGS \cite{liu1989on}, to solve Problem~(\ref{eq:smoothed_dual}).

Equation~(\ref{eq:gradient}) can be regarded as a soft-thresholding function \cite{fujiwara2016fast,fujiwara2016fast2}.
%that appears when solving Group Lasso problem.
Since the \([1-\mu/|\!|\bm{f}_{[l]}^{+}|\!|_{2}]_{+}\) part in Equation~(\ref{eq:gradient}) is represented as \(\max(1-\mu/|\!|\bm{f}_{[l]}^{+}|\!|_{2}, 0)\), a lot of gradient vectors \(\nabla \psi(\bm{\alpha}+\beta_{j} \bm{1}_{m}\!- \bm{c}_{j})_{[l]}\) turn out to be zero vectors during optimization.
However, the gradient vector \(\nabla \psi(\bm{\alpha}+\beta_{j} \bm{1}_{m}\!-\bm{c}_{j})_{[l]}\) is computed for all \(l\in\mathcal{L}\) in every iteration until convergence.
If \(g\) is the number of data samples per class, \(\mathcal{O}(|\mathcal{L}|ng)\) time is required for computing all the gradient vectors in one iteration.
When the total computation cost other than the gradient computation of the solver is \(\mathcal{O}(s_{\rm{s}})\) time, and the number of iterations until convergence is \(s_{\rm{t}}\),
the algorithm requires \(\mathcal{O}(|\mathcal{L}|ngs_{\rm{t}} + s_{\rm{s}})\) time.
This leads to a long processing time as the numbers of class labels \(|\mathcal{L}|\), data samples in each class \(g\), or unlabeled data samples \(n\) increases.

\section{Proposed Approach}
%In this section, 
We first outlines our ideas to efficiently solve Problem~(\ref{eq:smoothed_dual}).
Next, we explain these ideas in detail and introduce our algorithm.
The proofs can be found in Appendix.

\subsection{Ideas}
The algorithm takes a long time to solve Problem~(\ref{eq:smoothed_dual}) when the numbers of class labels and data samples are large.
This is because it requires \(\mathcal{O}(|\mathcal{L}|ng)\) time to compute the gradient vectors \(\nabla \psi(\cdot)\) for each iteration until convergence, as described in the previous section.

To accelerate the gradient computation, we introduce two ideas.
The first idea is to skip the gradient computations for groups whose gradient vectors must be zero vectors.
As shown in Equation~(\ref{eq:gradient}), many of the gradient vectors during optimization are expected to be zero vectors owing to the soft-thresholding function.
On the basis of this observation, we approximately compute the soft-thresholding function at \(\mathcal{O}(|\mathcal{L}|(n+g))\) time and quickly check whether the gradient vectors must be zero vectors or not in advance of the exact gradient computation.

In the first idea, if a gradient vector turns out to be a nonzero vector, the checking procedure incurs unnecessary cost as it does not skip the computation of that gradient.
The second idea is to identify a subset of nonzero gradient vectors and compute them in the specified set without the checking procedure of the first idea.
As a result, it reduces the overhead of the first idea.

\subsection{Skipping Gradient Computations}
This section details the first idea of skipping the gradient computations for groups whose gradient vectors must be zero vectors.
To identify such groups, we introduce a criterion that is computed as follows:
\begin{definition}
\label{def_upper}
Suppose that \(z_{l,j} := |\!|[(\bm{\alpha} + \beta_{j} \bm{1}_{m} - \bm{c}_{j})_{[l]}]_{+}|\!|_{2}\).
Let \(\tilde{z}_{l,j}\), \(\tilde{\bm{\alpha}}\) and \(\tilde{\bm{\beta}}\) be old versions (snapshots) of \(z_{l,j}\), \(\bm{\alpha}\) and \(\bm{\beta}\) at some iteration in a gradient-based algorithm for optimization of Problem~(\ref{eq:smoothed_dual}).
We define \(\overline{Z}\in\mathbb{R}_+^{|\mathcal{L}|\times n}\) and \(\overline{z}_{l,j}\in\overline{Z}\) is computed as follows:
\begin{eqnarray}
\label{upper}
\textstyle{\overline{z}_{l,j}=\tilde{z}_{l,j}+|\!|[\Delta \bm{\alpha}_{[l]}]_{+} |\!|_{2}+\sqrt{g_l}[\Delta \beta_{j}]_{+}},
\end{eqnarray}
where \(\Delta \bm{\alpha}=\bm{\alpha}-\tilde{\bm{\alpha}}\), \(\Delta \bm{\beta}=\bm{\beta}-\tilde{\bm{\beta}}\), and
\(g_l\) represents the size of the \(l\)-th group.
\end{definition}
The snapshots are taken at regular intervals in the gradient-based algorithm,
e.g., every ten iterations, in the iterative method.
Note that \(z_{l,j}\) is a quantity that is used to compute the gradient vector \(\nabla\psi(\bm{f})_{[l]}\) in Equation~(\ref{eq:gradient})
because \(\nabla\psi(\bm{f})_{[l]} = [1 - \mu/|\!|\bm{f}_{[l]}^{+}|\!|_{2}]_{+} \bm{f}_{[l]}^{+} = [1 - \mu\gamma/z_{l,j}]_{+} \bm{f}_{[l]}^{+}\).
The following lemma shows that \(\overline{z}_{l,j}\) is an upper bound of \(z_{l,j}\):
\begin{lemma}[Upper Bound]
\label{lemma_upper}
\(\overline{z}_{l,j} \geq z_{l,j}\) holds when \(\overline{z}_{l,j}\) is computed using Equation~(\ref{upper}).
\end{lemma}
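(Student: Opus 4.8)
The plan is to expand $z_{l,j}$ around the snapshot and bound it by a triangle-inequality argument, the only real care being to apply $[\cdot]_+$ to each summand \emph{before} invoking monotonicity of the Euclidean norm.

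First I would substitute $\bm{\alpha}=\tilde{\bm{\alpha}}+\Delta\bm{\alpha}$ and $\beta_{j}=\tilde{\beta}_{j}+\Delta\beta_{j}$ into the definition of $z_{l,j}$, so that the vector whose positive part is taken becomes, on group $l$,
\[
(\tilde{\bm{\alpha}}+\tilde{\beta}_{j}\bm{1}_{m}-\bm{c}_{j})_{[l]}+(\Delta\bm{\alpha})_{[l]}+\Delta\beta_{j}\,\bm{1}_{g_l}.
\]
Next I would record two elementary facts. (i) For real numbers $a,b$ one has $\max(a+b,0)\le\max(a,0)+\max(b,0)$; componentwise this gives $[\bm{u}+\bm{v}+\bm{w}]_{+}\le[\bm{u}]_{+}+[\bm{v}]_{+}+[\bm{w}]_{+}$ for vectors of matching dimension. (ii) The Euclidean norm is monotone on the nonnegative orthant: if $\bm{0}\le\bm{p}\le\bm{q}$ componentwise then $\|\bm{p}\|_{2}\le\|\bm{q}\|_{2}$.

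Applying (i) to the three summands above, then (ii) (legitimate because both sides are componentwise nonnegative after the $[\cdot]_{+}$), followed by the triangle inequality for $\|\cdot\|_{2}$, yields
\[
z_{l,j}\;\le\;\big\|[(\tilde{\bm{\alpha}}+\tilde{\beta}_{j}\bm{1}_{m}-\bm{c}_{j})_{[l]}]_{+}\big\|_{2}+\big\|[(\Delta\bm{\alpha})_{[l]}]_{+}\big\|_{2}+\big\|[\Delta\beta_{j}\,\bm{1}_{g_l}]_{+}\big\|_{2}.
\]
To conclude I would identify the three terms on the right: the first is exactly $\tilde{z}_{l,j}$ by the definition of the snapshot; the second is $\|[\Delta\bm{\alpha}_{[l]}]_{+}\|_{2}$; and the third equals $[\Delta\beta_{j}]_{+}\,\|\bm{1}_{g_l}\|_{2}=\sqrt{g_l}\,[\Delta\beta_{j}]_{+}$, since the scalar $\Delta\beta_{j}$ factors out of $[\cdot]_{+}$ when it is nonnegative and the term vanishes otherwise. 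Summing these reproduces the right-hand side of Equation~(\ref{upper}), hence $\overline{z}_{l,j}\ge z_{l,j}$.

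I do not anticipate a genuine obstacle; the proof is a short chain of inequalities. The one place to be attentive is the ordering above: monotonicity of $\|\cdot\|_{2}$ is available only on the nonnegative orthant, so one must first bound $[\,\text{sum}\,]_{+}$ by the \emph{sum of the positive parts} and only then split the norm — splitting $[\,\text{sum}\,]_{+}$ directly would not be valid. Note also that no sign assumption on $\Delta\bm{\alpha}$ or $\Delta\beta_{j}$ is used, so the bound holds for snapshots taken at any iteration.
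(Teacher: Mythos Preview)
Your proof is correct and follows essentially the same route as the paper: expand via the snapshot, use $[\bm{u}+\bm{v}+\bm{w}]_{+}\le[\bm{u}]_{+}+[\bm{v}]_{+}+[\bm{w}]_{+}$ componentwise, then pass to norms and split by the triangle inequality. You are in fact slightly more explicit than the paper in justifying the intermediate step via monotonicity of $\|\cdot\|_2$ on the nonnegative orthant, which the paper leaves implicit.
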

From the above lemma, we have the following lemma:
\begin{lemma}
\label{lemma_zero_vectors}
When \(\mu\gamma \geq \overline{z}_{l,j}\) holds,
we have \(\nabla\psi(\bm{\alpha} + \beta_{j} \bm{1}_{m} - \bm{c}_{j})_{[l]}=\bm{0}\).
\end{lemma}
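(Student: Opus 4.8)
The plan is to reduce the claim to a scalar inequality for the soft-thresholding coefficient in Equation~(\ref{eq:gradient}) and then invoke Lemma~\ref{lemma_upper}. First I would rewrite the $l$-th gradient block exactly as already noted in the text right after Definition~\ref{def_upper}: with $\bm{f} = \bm{\alpha} + \beta_j \bm{1}_m - \bm{c}_j$ and $\bm{f}^{+} = \frac{1}{\gamma}[\bm{f}]_{+}$, restriction to a group commutes with the element-wise $[\cdot]_{+}$, so $\|\bm{f}_{[l]}^{+}\|_2 = \frac{1}{\gamma}\,\|[(\bm{\alpha} + \beta_j \bm{1}_m - \bm{c}_j)_{[l]}]_{+}\|_2 = z_{l,j}/\gamma$, and hence
\begin{eqnarray*}
\nabla\psi(\bm{\alpha} + \beta_j \bm{1}_m - \bm{c}_j)_{[l]} = \left[1 - \frac{\mu\gamma}{z_{l,j}}\right]_{+}\bm{f}_{[l]}^{+}.
\end{eqnarray*}

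Next I would analyze the scalar factor $[1 - \mu\gamma/z_{l,j}]_{+} = \max(1 - \mu\gamma/z_{l,j},\,0)$. Since $\mu,\gamma>0$, this factor equals $0$ precisely when $1 - \mu\gamma/z_{l,j} \le 0$, i.e.\ when $z_{l,j} \le \mu\gamma$; when it is $0$, the whole block $\nabla\psi(\bm{f})_{[l]}$ is the zero vector regardless of $\bm{f}_{[l]}^{+}$. The degenerate case $z_{l,j}=0$ deserves one sentence: then $[(\bm{\alpha} + \beta_j \bm{1}_m - \bm{c}_j)_{[l]}]_{+} = \bm{0}$, so $\bm{f}_{[l]}^{+} = \bm{0}$ and the block is the zero vector directly from the $\argmin$ characterization in Equation~(\ref{eq:gradient}); so this case is covered as well.

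Finally I would chain the inequalities: by hypothesis $\mu\gamma \ge \overline{z}_{l,j}$, and by Lemma~\ref{lemma_upper} we have $\overline{z}_{l,j} \ge z_{l,j}$, so $z_{l,j} \le \mu\gamma$. By the previous step the scalar factor vanishes, and therefore $\nabla\psi(\bm{\alpha} + \beta_j \bm{1}_m - \bm{c}_j)_{[l]} = \bm{0}$, which is exactly the claim.

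There is no genuine obstacle here: the argument is a one-line identity plus a direct application of Lemma~\ref{lemma_upper}, with all the substance already front-loaded into that lemma. The only things requiring a bit of care are the boundary and degenerate situations ($z_{l,j}=0$, and the exact equality $z_{l,j}=\mu\gamma$), each of which I would dispatch with a single remark as above.
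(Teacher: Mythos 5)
Your proposal is correct and follows essentially the same route as the paper: the paper also reduces the claim to the scalar condition $\mu\gamma \ge z_{l,j}$ (packaged as an auxiliary lemma stating that the soft-thresholding factor vanishes in that case) and then chains it with Lemma~\ref{lemma_upper} via $\mu\gamma \ge \overline{z}_{l,j} \ge z_{l,j}$. Your explicit treatment of the degenerate case $z_{l,j}=0$ is a small extra precaution the paper leaves implicit, but it does not change the argument.
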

The above lemma shows that we can identify groups whose gradient vectors must be zero vectors by utilizing the upper bound \(\overline{z}_{l,j}\).
The cost of computing \(\overline{z}_{l,j}\) is as follows:
\begin{lemma}
\label{lemma_cost_upper}
Given snapshots \(\tilde{Z}\), \(\tilde{\bm{\alpha}}\), and \(\tilde{\bm{\beta}}\), the cost of computing Equation~(\ref{upper}) for all elements in \(\overline{Z}\in\mathbb{R}_+^{|\mathcal{L}|\times n}\) is \(\mathcal{O}(|\mathcal{L}|(n+g))\) time.
\end{lemma}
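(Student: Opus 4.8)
The plan is to exploit the separable structure of the right-hand side of Equation~(\ref{upper}). The quantity \(\overline{z}_{l,j}=\tilde{z}_{l,j}+|\!|[\Delta \bm{\alpha}_{[l]}]_{+}|\!|_{2}+\sqrt{g_l}[\Delta \beta_{j}]_{+}\) is a sum of three parts: the first part \(\tilde{z}_{l,j}\) is an entry of the given snapshot \(\tilde{Z}\) and costs nothing to read off; the second part \(|\!|[\Delta \bm{\alpha}_{[l]}]_{+}|\!|_{2}\) depends only on the group index \(l\); and the third part is a product of \(\sqrt{g_l}\), which depends only on \(l\), and \([\Delta\beta_j]_+\), which depends only on \(j\). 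Because of this structure, I would precompute all the \(l\)-only and \(j\)-only ingredients once, and then fill in each of the \(|\mathcal{L}|n\) entries of \(\overline{Z}\) in constant time, thereby avoiding the naive \(\mathcal{O}(|\mathcal{L}|ng)\) cost of recomputing a group norm inside the double loop over \((l,j)\).

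Concretely, I would first form \(\Delta\bm{\alpha}=\bm{\alpha}-\tilde{\bm{\alpha}}\in\mathbb{R}^m\) and \(\Delta\bm{\beta}=\bm{\beta}-\tilde{\bm{\beta}}\in\mathbb{R}^n\), which costs \(\mathcal{O}(m)\) and \(\mathcal{O}(n)\) time. Next, for every group \(l\in\mathcal{L}\) I would compute \(u_l:=|\!|[\Delta\bm{\alpha}_{[l]}]_+|\!|_2\) by taking the positive part of the \(g_l\) coordinates of \(\Delta\bm{\alpha}\) belonging to group \(l\), squaring, summing, and taking a square root; this is \(\mathcal{O}(g_l)\) per group, hence \(\mathcal{O}(\sum_{l\in\mathcal{L}} g_l)\) in total. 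I would also precompute \(\sqrt{g_l}\) for all \(l\) in \(\mathcal{O}(|\mathcal{L}|)\) time and \(w_j:=[\Delta\beta_j]_+\) for all \(j\) in \(\mathcal{O}(n)\) time. After these precomputations, each entry is obtained as \(\overline{z}_{l,j}=\tilde{z}_{l,j}+u_l+\sqrt{g_l}\,w_j\) in \(\mathcal{O}(1)\) time, so assembling the whole matrix \(\overline{Z}\) takes \(\mathcal{O}(|\mathcal{L}|n)\) time.

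Summing the contributions gives \(\mathcal{O}(m)+\mathcal{O}(n)+\mathcal{O}(|\mathcal{L}|)+\mathcal{O}(|\mathcal{L}|n)\). Since \(\sum_{l\in\mathcal{L}} g_l = m \le |\mathcal{L}|g\), where \(g\) is the (maximum) number of data samples per class, the \(\mathcal{O}(m)\) term is absorbed into \(\mathcal{O}(|\mathcal{L}|g)\), and the total collapses to \(\mathcal{O}(|\mathcal{L}|g+|\mathcal{L}|n)=\mathcal{O}(|\mathcal{L}|(n+g))\), which is the claimed bound.

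I do not expect a genuine obstacle here; the one point that needs care in the write-up is the amortization argument — making explicit that \(u_l\) and \(\sqrt{g_l}\) are each reused across all \(n\) columns and \(w_j\) across all \(|\mathcal{L}|\) rows, and that the per-group work for the norms telescopes via \(\sum_{l\in\mathcal{L}} g_l=m\) into a single linear pass over \(\Delta\bm{\alpha}\) rather than being incurred once per column. Everything else is routine complexity accounting.
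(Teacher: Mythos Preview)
Your proposal is correct and takes essentially the same approach as the paper: precompute the group-wise norms \(|\!|[\Delta\bm{\alpha}_{[l]}]_+|\!|_2\) in \(\mathcal{O}(|\mathcal{L}|g)\), the scalars \([\Delta\beta_j]_+\) in \(\mathcal{O}(n)\), and then assemble \(\overline{Z}\) in \(\mathcal{O}(|\mathcal{L}|n)\). The only cosmetic difference is that the paper assumes equal group sizes \(g\) for simplicity, whereas you handle variable \(g_l\) and then bound \(\sum_l g_l=m\le|\mathcal{L}|g\); both arrive at the same \(\mathcal{O}(|\mathcal{L}|(n+g))\) total.
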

The above lemma suggests that leveraging the upper bound allows us to efficiently identify groups whose gradient vectors must be zero vectors.
This is because the computation cost is \(\mathcal{O}(|\mathcal{L}|(n+g))\) time, while the original method requires \(\mathcal{O}(|\mathcal{L}|ng)\) time.

As described above, our upper bounds efficiently skip the gradient computations whose gradient vectors turn out to be zero vectors.
However, when a gradient vector turns out to be a nonzero vector, we must compute the upper bound as well as the exact gradient vector.
The next section shows a way to avoid this problem.

\subsection{Reduction of Overhead}
This section details our second idea: constructing a subset of groups whose gradient vectors turn out to be nonzero vectors.
In the first idea described above,
we compute the upper bound \(\overline{z}_{l,j}\) by using Equation~(\ref{upper}) and skip the gradient computation if \(\mu\gamma \geq \overline{z}_{l,j}\) holds from Lemma~\ref{lemma_upper}.
However, if \(\mu\gamma \geq \overline{z}_{l,j}\) does not hold, we must compute the gradient vector by using Equation~(\ref{eq:gradient}).
In this case, we have to compute both Equations~(\ref{upper}) and (\ref{eq:gradient}).
This may incur a large overhead if many gradient computations cannot be skipped.

To reduce the overhead of computing the upper bound, our second idea identifies a subset of nonzero gradient vectors during optimization and computes gradient vectors in the specified subset without computing the upper bound.
We introduce the following criterion to identify the subset by utilizing the same variables definitions in Definition~\ref{def_upper}:
\begin{definition}
\label{def_lower}
Suppose that \(\tilde{k}_{l,j} := |\!|(\tilde{\bm{\alpha}} + \tilde{\beta}_{j} \bm{1}_{m} - \bm{c}_{j})_{[l]}|\!|_{2}\)
and \(\tilde{o}_{l,j} := |\!|[(\tilde{\bm{\alpha}} + \tilde{\beta}_{j} \bm{1}_{m} - \bm{c}_{j})_{[l]}]_{-}|\!|_{2}\) are snapshots similar to those in Definition~\ref{def_upper}.
We define  \(\underline{Z}\in\mathbb{R}^{|\mathcal{L}|\times n}\) and \(\underline{z}_{l,j}\in\underline{Z}\) is computed as follows:
\begin{eqnarray}
\label{lower}
\textstyle{\underline{z}_{l,j}}&=&\textstyle{\tilde{k}_{l,j}-|\!|\Delta \bm{\alpha}_{[l]} |\!|_{2}-\sqrt{g_l}|\!|\Delta \beta_{j}|\!|_{2}} \nonumber \\
&&\textstyle{-\tilde{o}_{l,j}-|\!|[\Delta \bm{\alpha}_{[l]}]_{-} |\!|_{2}-\sqrt{g_l}|\!|[\Delta \beta_{j}]_{-}|\!|_{2}}.
\end{eqnarray}
\end{definition}
The following lemma shows that \(\underline{z}_{l,j}\) is a lower bound of \(z_{l,j}\) in Definition~\ref{def_upper}:
\begin{lemma}[Lower Bound]
\label{lemma_lower}
\(\underline{z}_{l,j} \leq z_{l,j}\) holds when \(\underline{z}_{l,j}\) is computed by Equation~(\ref{lower}).
\end{lemma}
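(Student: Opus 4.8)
The plan is to fix a group index $l$ and column $j$, write $\bm{f} := (\bm{\alpha} + \beta_{j}\bm{1}_{m} - \bm{c}_{j})_{[l]}$ and $\tilde{\bm{f}} := (\tilde{\bm{\alpha}} + \tilde{\beta}_{j}\bm{1}_{m} - \bm{c}_{j})_{[l]}$, so that $z_{l,j} = \|[\bm{f}]_{+}\|_{2}$, $\tilde{k}_{l,j} = \|\tilde{\bm{f}}\|_{2}$, $\tilde{o}_{l,j} = \|[\tilde{\bm{f}}]_{-}\|_{2}$, and the displacement decomposes as $\bm{f} - \tilde{\bm{f}} = \Delta\bm{\alpha}_{[l]} + \Delta\beta_{j}\bm{1}_{g_l}$. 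First I would use the elementwise identity $[\bm{f}]_{+} = \bm{f} - [\bm{f}]_{-}$ together with the triangle inequality to obtain the lower bound $z_{l,j} = \|\bm{f} - [\bm{f}]_{-}\|_{2} \ge \|\bm{f}\|_{2} - \|[\bm{f}]_{-}\|_{2}$. It then suffices to lower-bound $\|\bm{f}\|_{2}$ and upper-bound $\|[\bm{f}]_{-}\|_{2}$ by the quantities appearing in Equation~(\ref{lower}).

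For $\|\bm{f}\|_{2}$, I would apply the reverse triangle inequality to $\bm{f} = \tilde{\bm{f}} + \Delta\bm{\alpha}_{[l]} + \Delta\beta_{j}\bm{1}_{g_l}$, then the ordinary triangle inequality to split the displacement, and use $\|\Delta\beta_{j}\bm{1}_{g_l}\|_{2} = \sqrt{g_l}\,|\Delta\beta_{j}| = \sqrt{g_l}\,\|\Delta\beta_{j}\|_{2}$ (recall $\Delta\beta_{j}$ is a scalar). This gives $\|\bm{f}\|_{2} \ge \tilde{k}_{l,j} - \|\Delta\bm{\alpha}_{[l]}\|_{2} - \sqrt{g_l}\,\|\Delta\beta_{j}\|_{2}$, which is exactly the first line of Equation~(\ref{lower}).

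The core of the argument, and the step I expect to be the main obstacle, is the upper bound on $\|[\bm{f}]_{-}\|_{2}$. The naive Lipschitz estimate $\|[\bm{f}]_{-} - [\tilde{\bm{f}}]_{-}\|_{2} \le \|\bm{f} - \tilde{\bm{f}}\|_{2}$ is too loose: it would introduce the full displacement norms rather than their negative parts. Instead I would first establish the elementwise superadditivity $[\bm{x} + \bm{y}]_{-} \ge [\bm{x}]_{-} + [\bm{y}]_{-}$ of the operator $[\cdot]_{-} = \min(\cdot, \bm{0})$ via a short sign-based case analysis on each coordinate, and apply it twice to $\bm{f} = \tilde{\bm{f}} + \Delta\bm{\alpha}_{[l]} + \Delta\beta_{j}\bm{1}_{g_l}$ to get the coordinatewise chain $[\tilde{\bm{f}}]_{-} + [\Delta\bm{\alpha}_{[l]}]_{-} + [\Delta\beta_{j}]_{-}\bm{1}_{g_l} \le [\bm{f}]_{-} \le \bm{0}$. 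Since all three summands on the left are $\le \bm{0}$, their sum $\bm{u}$ is $\le \bm{0}$, and any vector coordinatewise sandwiched between a nonpositive vector $\bm{u}$ and $\bm{0}$ has $\|\cdot\|_{2} \le \|\bm{u}\|_{2}$; hence $\|[\bm{f}]_{-}\|_{2} \le \|[\tilde{\bm{f}}]_{-} + [\Delta\bm{\alpha}_{[l]}]_{-} + [\Delta\beta_{j}]_{-}\bm{1}_{g_l}\|_{2}$. One more triangle inequality together with $\|[\Delta\beta_{j}]_{-}\bm{1}_{g_l}\|_{2} = \sqrt{g_l}\,\|[\Delta\beta_{j}]_{-}\|_{2}$ gives $\|[\bm{f}]_{-}\|_{2} \le \tilde{o}_{l,j} + \|[\Delta\bm{\alpha}_{[l]}]_{-}\|_{2} + \sqrt{g_l}\,\|[\Delta\beta_{j}]_{-}\|_{2}$, the second line of Equation~(\ref{lower}).

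Finally I would chain the two bounds, $z_{l,j} \ge \|\bm{f}\|_{2} - \|[\bm{f}]_{-}\|_{2} \ge \underline{z}_{l,j}$, which is the claim. The only nontrivial ingredient is the superadditivity-plus-sandwich step for $[\cdot]_{-}$; everything else is triangle-inequality bookkeeping and the scalar identity $\|v\,\bm{1}_{g_l}\|_{2} = \sqrt{g_l}\,|v|$.
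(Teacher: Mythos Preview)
Your proposal is correct and follows essentially the same route as the paper: the identity $[\bm{f}]_{+}=\bm{f}-[\bm{f}]_{-}$ plus the reverse triangle inequality for the split, then a reverse triangle inequality for $\|\bm{f}\|_{2}$ and a subadditivity/superadditivity argument for $\|[\bm{f}]_{-}\|_{2}$. The paper phrases the last step via $[-\bm{x}]_{+}=-[\bm{x}]_{-}$ and the inequality $\max(p+q,0)\le\max(p,0)+\max(q,0)$ from the proof of Lemma~\ref{lemma_upper}, which is exactly your superadditivity of $[\cdot]_{-}$ in dual form; your explicit sandwich argument is a clean way to pass from the elementwise bound to the norm bound.
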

From the above lemma, we have the following lemma:
\begin{lemma}
\label{lemma_nonzero_vectors}
When \(\mu\gamma < \underline{z}_{l,j}\) holds,
we obtain \(\nabla\psi(\bm{\alpha} + \beta_{j} \bm{1}_{m} - \bm{c}_{j})_{[l]}\neq\bm{0}\).
\end{lemma}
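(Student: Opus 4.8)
The plan is to reduce the claim to the elementary fact that the soft-thresholding factor appearing in Equation~(\ref{eq:gradient}) is strictly positive exactly when $z_{l,j} > \mu\gamma$, and then to feed in the lower bound from Lemma~\ref{lemma_lower}. First I would set $\bm{f} = \bm{\alpha} + \beta_{j}\bm{1}_{m} - \bm{c}_{j}$ and rewrite the gradient block in terms of $z_{l,j}$. Since $\bm{f}^{+} = \tfrac{1}{\gamma}[\bm{f}]_{+}$, we get $\|\bm{f}_{[l]}^{+}\|_{2} = \tfrac{1}{\gamma}\|[\bm{f}_{[l]}]_{+}\|_{2} = z_{l,j}/\gamma$, so that $\nabla\psi(\bm{f})_{[l]} = [\,1 - \mu\gamma/z_{l,j}\,]_{+}\,\bm{f}_{[l]}^{+}$ — the same identity already used to motivate Lemma~\ref{lemma_zero_vectors}.

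Next I would show that $\nabla\psi(\bm{f})_{[l]} \neq \bm{0}$ whenever $z_{l,j} > \mu\gamma$. Two things happen simultaneously in this regime, both relying on $\mu,\gamma>0$: the scalar $1 - \mu\gamma/z_{l,j}$ is strictly positive (so the $[\,\cdot\,]_{+}$ leaves it unchanged and the factor does not vanish), and the inequality $z_{l,j} > \mu\gamma \geq 0$ forces $\|[\bm{f}_{[l]}]_{+}\|_{2} > 0$, hence $\bm{f}_{[l]}^{+} \neq \bm{0}$. A strictly positive scalar times a nonzero vector is nonzero, which gives the desired conclusion. This step is the exact mirror image of the argument behind Lemma~\ref{lemma_zero_vectors}, just with the inequality reversed.

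Finally I would invoke Lemma~\ref{lemma_lower}: the hypothesis $\mu\gamma < \underline{z}_{l,j}$ combined with $\underline{z}_{l,j} \leq z_{l,j}$ yields $\mu\gamma < z_{l,j}$, and the previous paragraph then delivers $\nabla\psi(\bm{\alpha} + \beta_{j}\bm{1}_{m} - \bm{c}_{j})_{[l]} \neq \bm{0}$. There is no real obstacle here — the only point that deserves care is keeping track, in the $z_{l,j} > \mu\gamma$ case, that the same inequality both keeps the thresholding coefficient positive and rules out $\bm{f}_{[l]}^{+} = \bm{0}$, so that neither factor in the product can be zero.
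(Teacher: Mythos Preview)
Your proposal is correct and follows essentially the same route as the paper: it establishes the intermediate claim that $z_{l,j}>\mu\gamma$ forces $\nabla\psi(\bm{f})_{[l]}\neq\bm{0}$ (the paper's Lemma~\ref{lemma_lower_condition}) and then chains this with Lemma~\ref{lemma_lower}. If anything, you are slightly more explicit than the paper in checking that both the thresholding scalar and the vector $\bm{f}_{[l]}^{+}$ are nonzero.
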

According to this lemma, we can identify nonzero gradient vectors by leveraging the lower bound \(\underline{z}_{l,j}\).
Therefore, we can construct the subset of groups that have nonzero gradient vectors as follows:
\begin{definition}
\label{def_set}
The set \(\mathbb{N}\) is constructed as
\begin{eqnarray}
\label{set}
\mathbb{N}=\{(l,j)\in\{1,...,|\mathcal{L}|\}\times\{1,...,n\}|\mu\gamma < \underline{z}_{l,j}\}.
\end{eqnarray}
\end{definition}
The computation cost is as follows:
\begin{lemma}
\label{lemma_cost_lower}
Given snapshots \(\tilde{K}\), \(\tilde{O}\), \(\tilde{\bm{\alpha}}\), and \(\tilde{\bm{\beta}}\), the computation cost of constructing the set \(\mathbb{N}\) is \(\mathcal{O}(|\mathcal{L}|(n+g))\) time.
\end{lemma}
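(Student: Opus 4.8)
The plan is to prove this exactly as the analogous Lemma~\ref{lemma_cost_upper} is presumably proved: split the work into a \emph{precomputation} phase that produces quantities shared across many entries of $\underline{Z}$, and a \emph{fill-and-filter} phase that evaluates Equation~(\ref{lower}) for each pair $(l,j)$ and applies the membership test $\mu\gamma < \underline{z}_{l,j}$ from Definition~\ref{def_set}. The key observation driving the bound is that every summand in Equation~(\ref{lower}) depends either only on $l$, only on $j$, or is a snapshot entry that is read in $\mathcal{O}(1)$ time.

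First I would classify the subexpressions. The terms $\tilde{k}_{l,j}$ and $\tilde{o}_{l,j}$ are entries of the supplied snapshot matrices $\tilde{K},\tilde{O}\in\mathbb{R}^{|\mathcal{L}|\times n}$, so each costs $\mathcal{O}(1)$ to access. Forming $\Delta\bm{\alpha}=\bm{\alpha}-\tilde{\bm{\alpha}}$ costs $\mathcal{O}(m)$ and $\Delta\bm{\beta}=\bm{\beta}-\tilde{\bm{\beta}}$ costs $\mathcal{O}(n)$. The per-group quantities $\|\Delta\bm{\alpha}_{[l]}\|_2$ and $\|[\Delta\bm{\alpha}_{[l]}]_{-}\|_2$ depend only on $l$; since the index sets $\{[l]\}_{l\in\mathcal{L}}$ select disjoint blocks of coordinates of $\Delta\bm{\alpha}\in\mathbb{R}^{m}$, computing all of them together costs $\mathcal{O}(\sum_{l\in\mathcal{L}} g_l)=\mathcal{O}(m)\subseteq\mathcal{O}(|\mathcal{L}|g)$, and the $|\mathcal{L}|$ scalars $\sqrt{g_l}$ cost $\mathcal{O}(|\mathcal{L}|)$. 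The remaining quantities $\|\Delta\beta_j\|_2=|\Delta\beta_j|$ and $\|[\Delta\beta_j]_{-}\|_2$ depend only on $j$ (each $\beta_j$ is a scalar), so all $n$ of each together cost $\mathcal{O}(n)$. Hence the precomputation is $\mathcal{O}(m+n+|\mathcal{L}|+|\mathcal{L}|g)=\mathcal{O}(|\mathcal{L}|(n+g))$.

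Next, with these values cached, Equation~(\ref{lower}) for a single $(l,j)$ combines two snapshot lookups with three per-$l$ values and two per-$j$ values via a constant number of additions and multiplications, i.e.\ $\mathcal{O}(1)$ per entry, so filling all of $\underline{Z}$ is $\mathcal{O}(|\mathcal{L}|n)$. Each membership test $\mu\gamma<\underline{z}_{l,j}$ is one comparison, and appending a qualifying pair $(l,j)$ to a list representing $\mathbb{N}$ is $\mathcal{O}(1)$; since $|\mathbb{N}|\le|\mathcal{L}|n$, constructing the set from $\underline{Z}$ is also $\mathcal{O}(|\mathcal{L}|n)$. Adding the two phases gives $\mathcal{O}(|\mathcal{L}|(n+g))+\mathcal{O}(|\mathcal{L}|n)=\mathcal{O}(|\mathcal{L}|(n+g))$, which is the claim.

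The argument is essentially bookkeeping, so I do not expect a genuine obstacle; the one point to state carefully is that summing the per-group work for the $\bm{\alpha}$-side norms yields $\sum_{l\in\mathcal{L}} g_l = m \le |\mathcal{L}|g$ because the groups form a sub-partition of the $m$ coordinates of $\Delta\bm{\alpha}$, so this contribution is absorbed into $\mathcal{O}(|\mathcal{L}|g)$ rather than being counted as $\mathcal{O}(g)$ for each of the $n$ columns. One should also make explicit, as in Lemma~\ref{lemma_cost_upper}, that the bound counts only the cost of constructing $\mathbb{N}$ \emph{given} the snapshots $\tilde{K},\tilde{O},\tilde{\bm{\alpha}},\tilde{\bm{\beta}}$; the (amortized) cost of periodically refreshing those snapshots is accounted for separately.
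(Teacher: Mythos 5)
Your proof is correct and follows essentially the same route the paper takes: the paper simply notes that the proof of Lemma~\ref{lemma_cost_lower} is analogous to that of Lemma~\ref{lemma_cost_upper}, which likewise precomputes the per-group norms in $\mathcal{O}(|\mathcal{L}|g)$ time and the per-$j$ quantities in $\mathcal{O}(n)$ time, then evaluates the bound for all $|\mathcal{L}|\times n$ entries at $\mathcal{O}(1)$ each. Your additional remarks about the $\mathcal{O}(1)$ membership test and the groups forming a sub-partition of the $m$ coordinates are consistent with, and slightly more explicit than, the paper's argument.
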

The proof is similar to that of Lemma~\ref{lemma_cost_upper}.
After constructing the set \(\mathbb{N}\), the gradient vectors corresponding to \(\mathbb{N}\) are computed by using Equation~(\ref{eq:gradient}) without the checking procedure of Lemma~\ref{lemma_zero_vectors}.
As a result, our method can reduce the total overhead computing the upper bound since it does not compute the upper bound corresponding to \(\mathbb{N}\).
Note that although the cost of the lower bound is the same as that of the upper bound,
the total cost of the lower bound is smaller than that of the upper bound.
This is because the set \(\mathbb{N}\) is constructed at regular intervals during optimization.
The procedure is described in the next section.

\subsection{Algorithm}
\begin{algorithm}[t]
\begin{small}
\caption{Fast OT with Group Regularizer}
\label{alg_fastot}    
\begin{algorithmic}[1]
\State \(\bm{\alpha}\gets\bm{0}\), \(\bm{\beta}\gets\bm{0}\), \(\tilde{\bm{\alpha}}\gets\bm{0}\), \(\tilde{\bm{\beta}}\gets\bm{0}\), \(\mathbb{N}=\emptyset\)
\Repeat
\State apply a solver to Problem~\ref{eq:smoothed_dual} with the function \Call{GradPsi}{} in Algorithm~\ref{alg_grad} for \(r\) iterations;
\State compute \(|\!|\Delta \bm{\alpha}_{[l]} |\!|_{2}\) and \(|\!|[\Delta \bm{\alpha}_{[l]}]_{-} |\!|_{2}\) for \(l\in\mathcal{L}\);
\State compute \(\Delta \bm{\beta}\) and \([\Delta \bm{\beta}]_{-}\);
\State \(\mathbb{N}=\emptyset\);
\For{each \(j\in \{1,...n\}\)}
    \For{each \(l\in \mathcal{L}\)}
        \State compute the lower bound \(\underline{z}_{l,j}\) by Equation~(\ref{lower});
        \If{\(\underline{z}_{l,j}>\mu\gamma\)}
            \State add \((j,l)\) to \(\mathbb{N}\);
        \EndIf
    \EndFor
\EndFor
\State update the snapshots;
\Until{\(\bm{\alpha}\) and \(\bm{\beta}\) converge}
\end{algorithmic}
\end{small}
\end{algorithm}
\begin{algorithm}[t]
\begin{small}
\caption{Gradient Computation of \(\nabla\psi (\bm{\alpha} + \beta_{j} \bm{1}_{m} - \bm{c})\)}
\label{alg_grad}
\begin{algorithmic}[1]
\Function{GradPsi:}{}
\For{\((l,j)\in\mathbb{N}\)}
    \State compute \(\nabla \psi(\bm{\alpha}_{[l]} + \beta_{j} \bm{1}_{gl} - \bm{c}_{j[l]})\);
\EndFor
\State compute \(|\!|[\Delta \bm{\alpha}_{[l]}]_{+} |\!|_{2}\) for \(l\in\mathcal{L}\) and \([\Delta \bm{\beta}]_{+}\);
\For{\((l,j)\notin\mathbb{N}\)}
        \State compute the upper bound \(\overline{z}_{l,j}\) by Equation~(\ref{upper});
        \If{\(\overline{z}_{l,j}\leq\mu\gamma\)}
            \State \(\nabla \psi(\bm{\alpha}_{[l]} + \beta_{j} \bm{1}_{gl} - \bm{c}_{j[l]})\gets\bm{0}\);
        \Else
            \State compute \(\nabla \psi(\bm{\alpha}_{[l]} + \beta_{j} \bm{1}_{gl} - \bm{c}_{j[l]})\);
        \EndIf
\EndFor
\State \Return \([\nabla \psi(\bm{\alpha} + \beta_{1} \bm{1}_{m} - \bm{c}),...,\nabla \psi(\bm{\alpha} + \beta_{n} \bm{1}_{m} - \bm{c})]\);
\EndFunction
\end{algorithmic}
\end{small}
\end{algorithm}
Algorithm~\ref{alg_fastot} is the pseudocode of our algorithm.
Although it applies a solver, such as L-BFGS, to Problem~(\ref{eq:smoothed_dual}),
the solver efficiently computes \(\nabla\psi (\bm{\alpha} + \beta_{j} \bm{1}_{m} - \bm{c})\) by utilizing the upper bounds, as we described in Lemma~\ref{lemma_zero_vectors}.
Algorithm~\ref{alg_grad} is the gradient computation.
Here, the gradient vectors corresponding to \(\mathbb{N}\) are computed as usual by following Lemma~\ref{lemma_nonzero_vectors} (lines 2--4).
The other gradient vectors are computed with the upper bounds (lines 6--13).
Namely, we first compute the upper bound \(\overline{z}_{l,j}\) on the basis of Equation~(\ref{upper}) (line 7)
and skip the gradient computation if \(\overline{z}_{l,j}\leq\mu\gamma\) holds by following Lemma~\ref{lemma_zero_vectors} (lines 8--9).
If the inequality does not hold, the algorithm does not skip the computation (lines10--11).

In Algorithm~\ref{alg_fastot}, after it initializes the parameters, snapshots, and set \(\mathbb{N}\) (line 1),
it calls the solver with Algorithm~\ref{alg_grad} for \(r\) iterations (line 3).
We set \(r=10\) in this paper.
Next, it performs precomputations to obtain lower bounds for constructing the set \(\mathbb{N}\) whose gradient vectors are expected to be nonzero vectors (lines 4--5).
It uses Equation~(\ref{lower}) to compute the lower bound \(\underline{z}_{l,j}\) (line 9)
and adds the index to \(\mathbb{N}\) if \(\underline{z}_{l,j}>\mu\gamma\) holds on the basis of Lemma~\ref{lemma_nonzero_vectors} (lines 10--12).
Then, it updates the snapshots (line 15).
It repeats the above procedure until convergence (line 16).

The computation cost of Algorithm~\ref{alg_fastot} is as follows:
\begin{theorem}[Computation Cost]
\label{thm_cost}
In Algorithm~\ref{alg_fastot}, let \(s_{\rm{i}}\) and \(s_{\rm{n}}\) be the total number of \((j,l)\in\mathbb{N}\) and \((j,l)\notin\mathbb{N}\) for all iterations, respectively.
Suppose that \(s_{\rm{u}}\) is the total number of un-skipped gradient computations on line 11,
and \(s_{\rm{r}}\) is the total number of loops of lines 2--16.
If \(\mathcal{O}(s_{\rm{s}})\) time is the total computation cost other than the gradient computation of the solver in Algorithm~\ref{alg_fastot},
it requires \(\mathcal{O}((|\mathcal{L}|ns_{\rm{r}}+s_{\rm{i}}+s_{\rm{u}})g + s_{\rm{n}} + s_{\rm{s}})\) time.
\end{theorem}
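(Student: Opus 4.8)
The plan is to walk line by line through Algorithms~\ref{alg_fastot} and \ref{alg_grad}, charge the cost of each line to one of the five quantities $|\mathcal{L}|ns_{\rm r}g$, $s_{\rm i}g$, $s_{\rm u}g$, $s_{\rm n}$, or $s_{\rm s}$, and add up. Two facts are used throughout. First, $r$ is a fixed constant, so the outer \textbf{Repeat} loop runs $\mathcal{O}(s_{\rm r})$ times and the solver --- together with the routine \textsc{GradPsi} --- performs $\mathcal{O}(s_{\rm r})$ gradient evaluations in total. Second, $g_l\le g$ and $\sum_{l\in\mathcal{L}}g_l=m\le|\mathcal{L}|g$, so one exact block $\nabla\psi(\cdot)_{[l]}$ computed via Equation~(\ref{eq:gradient}) costs $\mathcal{O}(g_l)=\mathcal{O}(g)$ and a full precomputation over the labeled coordinates costs $\mathcal{O}(m)=\mathcal{O}(|\mathcal{L}|g)$.

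First I would charge the exact gradient computations. In \textsc{GradPsi}, lines~2--4 compute one block per pair in $\mathbb{N}$, which summed over all invocations is $\mathcal{O}(s_{\rm i}g)$ by the definition of $s_{\rm i}$; lines~10--11 compute one block per un-skipped pair, which is $\mathcal{O}(s_{\rm u}g)$ by the definition of $s_{\rm u}$. Next I would charge the screening steps. Once the group-wise norms $\|[\Delta\bm\alpha_{[l]}]_{\pm}\|_2$, $\|\Delta\bm\alpha_{[l]}\|_2$, the vectors $[\Delta\bm\beta]_{\pm}$, and the snapshots $\tilde z_{l,j},\tilde k_{l,j},\tilde o_{l,j},\sqrt{g_l}$ are available, each evaluation of $\overline z_{l,j}$ by Equation~(\ref{upper}) and of $\underline z_{l,j}$ by Equation~(\ref{lower}), as well as each comparison with $\mu\gamma$ and each insertion into $\mathbb{N}$, is $\mathcal{O}(1)$ --- this is the content of Lemmas~\ref{lemma_cost_upper} and \ref{lemma_cost_lower}. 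Hence the upper-bound branch of \textsc{GradPsi} (lines~6--13) costs $\mathcal{O}(s_{\rm n})$ in total, while the lower-bound loop of Algorithm~\ref{alg_fastot} (lines~7--14) touches all $|\mathcal{L}|n$ pairs per outer iteration, for a total of $\mathcal{O}(|\mathcal{L}|ns_{\rm r})$, which is absorbed into $|\mathcal{L}|ns_{\rm r}g$.

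Then I would charge the precomputations and the snapshot refresh. Line~5 of Algorithm~\ref{alg_grad} and lines~4--5 of Algorithm~\ref{alg_fastot} each cost $\mathcal{O}(m+n)=\mathcal{O}(|\mathcal{L}|g+n)$ and are executed $\mathcal{O}(s_{\rm r})$ times, for a total of $\mathcal{O}(s_{\rm r}(|\mathcal{L}|g+n))\subseteq\mathcal{O}(|\mathcal{L}|ns_{\rm r}g)$. Refreshing the snapshots on line~15 of Algorithm~\ref{alg_fastot} recomputes $\tilde z_{l,j}$, $\tilde k_{l,j}$, $\tilde o_{l,j}$ for all $(l,j)$, each in $\mathcal{O}(g_l)$ time, that is $\mathcal{O}(mn)=\mathcal{O}(|\mathcal{L}|ng)$ per outer iteration and $\mathcal{O}(|\mathcal{L}|ngs_{\rm r})$ overall --- this is the line that supplies the $|\mathcal{L}|ns_{\rm r}g$ term, together with the $\mathcal{O}(mn)$ assembly of the returned gradient array and (a fortiori) the $\mathcal{O}(m+n)$ initialization on line~1. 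Adding the solver's remaining work $\mathcal{O}(s_{\rm s})$ and summing the five buckets gives $\mathcal{O}((|\mathcal{L}|ns_{\rm r}+s_{\rm i}+s_{\rm u})g+s_{\rm n}+s_{\rm s})$.

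I expect the delicate part to be the screening step: one must verify that every term in Equations~(\ref{upper}) and (\ref{lower}) is either a stored snapshot or one of the $\mathcal{O}(|\mathcal{L}|g+n)$ quantities $\|[\Delta\bm\alpha_{[l]}]_{\pm}\|_2$, $\|\Delta\bm\alpha_{[l]}\|_2$, $[\Delta\beta_j]_{\pm}$, $\|[\Delta\beta_j]_{\pm}\|$ that are formed only once per iteration, so that the $|\mathcal{L}|n$ bound evaluations genuinely run in $\mathcal{O}(|\mathcal{L}|n)$ rather than $\mathcal{O}(|\mathcal{L}|ng)$; this is exactly what keeps a factor $g$ off $s_{\rm n}$ and is the whole source of the acceleration. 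One also has to confirm that the only unavoidable $\mathcal{O}(|\mathcal{L}|ng)$ sweep that survives is the snapshot refresh, whose frequency is $\mathcal{O}(s_{\rm r})$ because $r$ is a constant.
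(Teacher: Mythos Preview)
Your proof is correct and follows essentially the same decomposition as the paper's own proof: $\mathcal{O}(gs_{\rm i})$ for the gradients in $\mathbb{N}$, $\mathcal{O}(gs_{\rm u})$ for the un-skipped gradients on line~11, $\mathcal{O}(s_{\rm n})$ for the upper-bound evaluations via Lemma~\ref{lemma_cost_upper}, $\mathcal{O}(|\mathcal{L}|(n+g)s_{\rm r})$ for the construction of $\mathbb{N}$ via Lemma~\ref{lemma_cost_lower}, $\mathcal{O}(|\mathcal{L}|ngs_{\rm r})$ for the snapshot refresh, and $\mathcal{O}(s_{\rm s})$ for the solver overhead. Your accounting is more explicit than the paper's (which is quite terse), but the buckets and the reasoning are the same.
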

If many gradient vectors become zero vectors during optimization, \(s_{\rm{i}}\) and \(s_{\rm{u}}\) are expected to be small.
Since the original method requires \(\mathcal{O}(|\mathcal{L}|ngrs_{\rm{r}} + s_{\rm{s}})\) time from \(s_{\rm{t}}=rs_{\rm{r}}\),
our method can be faster than the original method when the gradients are sparse.
Note that since our method reduces the cost of the gradient computation, it can be used with a wide range of solvers, such as L-BFGS.
Therefore, the total computation cost changes depending on the cost of the solver, i.e., \(s_{\rm{s}}\) in the total computation cost in Theorem~\ref{thm_cost}.

In terms of the optimization result, Algorithm~\ref{alg_fastot} has the following property:
\begin{theorem}[Optimization Result]
\label{thm_result}
Suppose that Algorithm~\ref{alg_fastot} has the same hyperparameters as those of the original method.
Then, Algorithm~\ref{alg_fastot} converges to the same solution and objective value as those of the original method.
\end{theorem}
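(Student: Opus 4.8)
The plan is to observe that Algorithm~\ref{alg_fastot} differs from the original method only in \emph{how} the gradient oracle $\nabla\psi(\cdot)$ is implemented, and then to show that the implementation in Algorithm~\ref{alg_grad} returns the \emph{exact} gradient at every iteration. Once this is established, the theorem follows because a deterministic solver (such as L-BFGS) resumed from the same internal state and fed identical gradients, under identical hyperparameters and the same initialization $\bm\alpha=\bm\beta=\bm0$ (line~1), generates an identical sequence of iterates, hence converges to the same limit point and the same objective value.

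First I would prove the correctness of \textsc{GradPsi}: for every $j$, the vector it returns equals $\nabla\psi(\bm\alpha+\beta_j\bm1_m-\bm c_j)$. Working block by block over $l\in\mathcal L$, there are three cases. If $(l,j)\in\mathbb N$ (lines~2--4 of Algorithm~\ref{alg_grad}), the block is computed directly from the closed form in Equation~(\ref{eq:gradient}), hence is exact. If $(l,j)\notin\mathbb N$ and $\overline z_{l,j}>\mu\gamma$ (lines~10--11), it is again computed from Equation~(\ref{eq:gradient}), hence exact. If $(l,j)\notin\mathbb N$ and $\overline z_{l,j}\le\mu\gamma$ (lines~8--9), the block is set to $\bm0$; by Lemma~\ref{lemma_upper} we have $\mu\gamma\ge\overline z_{l,j}\ge z_{l,j}$, so Lemma~\ref{lemma_zero_vectors} gives $\nabla\psi(\bm\alpha+\beta_j\bm1_m-\bm c_j)_{[l]}=\bm0$, i.e., the skipped value is the true one. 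Concatenating over all $l$ and all $j$ shows \textsc{GradPsi} returns the exact $\nabla\psi$. A useful remark here is that the set $\mathbb N$ only ever \emph{forces} an exact evaluation and can never trigger a skip, so it cannot introduce any error regardless of whether Lemma~\ref{lemma_nonzero_vectors} is tight; consequently the correctness of the output does not depend on any property of the lower bound $\underline z_{l,j}$, which matters only for efficiency.

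Next I would lift this to the full problem. The gradient of the objective of Problem~(\ref{eq:smoothed_dual}) with respect to $(\bm\alpha,\bm\beta)$ is a fixed linear expression in $\bm a$, $\bm b$, and the vectors $\{\nabla\psi(\bm\alpha+\beta_j\bm1_m-\bm c_j)\}_{j=1}^n$ (chain rule applied to $\sum_j\psi(\cdot)$), so once these vectors are exact the gradient handed to the solver in line~3 coincides with the one used by the original method at the same point. Since lines~4--15 only read $\bm\alpha,\bm\beta$ and update the snapshots and $\mathbb N$ (never $\bm\alpha$ or $\bm\beta$), and line~3 resumes the solver from its current state, a straightforward induction on the iteration counter shows that the iterates of Algorithm~\ref{alg_fastot}, and the solver's internal state, agree with those of the original method throughout. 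Passing to the limit under the common convergence criterion gives the same solution $(\bm\alpha^\star,\bm\beta^\star)$, the same objective value of Problem~(\ref{eq:smoothed_dual}), and hence, via the recovery $\bm t_j^\star=\nabla\psi(\bm\alpha^\star+\beta_j^\star\bm1_m-\bm c_j)$, the same optimal plan $T^\star$ and the same value of Problem~(\ref{eq:primal_gl}).

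The main obstacle is not computational but a matter of pinning down the assumptions: the argument needs the solver to be a deterministic function of its internal state plus the gradient oracle's outputs, and needs resuming it in blocks of $r$ iterations to be equivalent to running it continuously — both hold for L-BFGS and standard first-order methods and are implicit in ``the same hyperparameters.'' With that in place, the entire proof reduces to the block-wise case analysis above together with Lemmas~\ref{lemma_upper} and~\ref{lemma_zero_vectors}, which are already available; no new estimates are needed.
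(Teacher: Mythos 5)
Your proposal is correct and follows essentially the same route as the paper's own proof: the case analysis showing that \textsc{GradPsi} always returns the exact gradient (blocks in $\mathbb{N}$ and un-skipped blocks computed from Equation~(\ref{eq:gradient}), skipped blocks justified by Lemmas~\ref{lemma_upper} and~\ref{lemma_zero_vectors}), from which identity of the iterates and hence of the limit follows. Your version merely makes explicit what the paper leaves implicit (solver determinism, resuming in blocks of $r$ iterations, and the irrelevance of $\mathbb{N}$'s correctness to exactness), which is a fair elaboration rather than a different argument.
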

The above theorem clearly holds because our method exactly computes all the nonzero gradient vectors in lines 2--4 and 6--13 in Algorithm~\ref{alg_grad}.
It suggests that our algorithm efficiently solves Problem~(\ref{eq:smoothed_dual}) without degrading accuracy.

\subsection{Convergence of Bounds}
Although Theorem~\ref{thm_result} guarantees the optimization results of the solution and the objective value after convergence,
our bounds also have advantageous properties for convergence.
Specifically, the upper bound has the following property:
\begin{theorem}[Convergence of Upper Bound]
\label{lemma_upper_error}
Let \(\overline{\epsilon}\) be an error bound defined as \(|\overline{z}_{l,j}-z_{l,j}|\).
Then, we have \(\overline{\epsilon}=0\) when \(\bm{\alpha}\) and \(\bm{\beta}\) reaches convergence through a gradient-based algorithm.
\end{theorem}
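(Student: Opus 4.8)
The plan is to exploit the fact that $\overline{z}_{l,j}$ in Equation~(\ref{upper}) differs from the snapshot $\tilde{z}_{l,j}$ only by the two nonnegative correction terms $\|[\Delta\bm{\alpha}_{[l]}]_+\|_2$ and $\sqrt{g_l}[\Delta\beta_j]_+$, both of which are driven to zero once the iterates stop moving. Concretely, I would first observe that the snapshots $\tilde{\bm{\alpha}},\tilde{\bm{\beta}}$ are refreshed at regular (finite) intervals in the gradient-based algorithm, so when $\bm{\alpha}$ and $\bm{\beta}$ reach a fixed point of the solver, the most recently recorded snapshot already equals the current iterate: $\tilde{\bm{\alpha}}=\bm{\alpha}$ and $\tilde{\bm{\beta}}=\bm{\beta}$, hence $\Delta\bm{\alpha}=\bm{0}$ and $\Delta\bm{\beta}=\bm{0}$. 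If one prefers to avoid assuming exact stationarity, the same conclusion follows by a limiting argument: a convergent sequence of iterates and its subsequence of snapshots share the same limit, so $\Delta\bm{\alpha}\to\bm{0}$ and $\Delta\bm{\beta}\to\bm{0}$, and one then invokes continuity of the maps appearing in Equation~(\ref{upper}) and Definition~\ref{def_upper}.

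Given $\Delta\bm{\alpha}=\bm{0}$ and $\Delta\bm{\beta}=\bm{0}$, the two correction terms vanish because $[\bm{0}]_+=\bm{0}$ and $[0]_+=0$, so Equation~(\ref{upper}) collapses to $\overline{z}_{l,j}=\tilde{z}_{l,j}$. On the other hand, by the definitions in Definition~\ref{def_upper} we have $\tilde{z}_{l,j}=\|[(\tilde{\bm{\alpha}}+\tilde{\beta}_j\bm{1}_m-\bm{c}_j)_{[l]}]_+\|_2$, and substituting $\tilde{\bm{\alpha}}=\bm{\alpha}$, $\tilde{\bm{\beta}}=\bm{\beta}$ gives exactly $\tilde{z}_{l,j}=z_{l,j}$. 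Combining the two identities yields $\overline{z}_{l,j}=z_{l,j}$, hence $\overline{\epsilon}=|\overline{z}_{l,j}-z_{l,j}|=0$, as claimed. Lemma~\ref{lemma_upper} additionally guarantees $\overline{z}_{l,j}\ge z_{l,j}\ge 0$ throughout the optimization, which is consistent with the error shrinking to zero from above.

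The argument is short; the only point requiring care is the justification that the snapshot coincides with (or converges to) the current iterate at convergence — i.e., that refreshing the snapshot at a fixed finite period guarantees $\Delta\bm{\alpha}$ and $\Delta\bm{\beta}$ vanish in the limit. I would state this explicitly (either ``at a fixed point the last snapshot equals the current iterate'' or the subsequence-limit version) before performing the substitution, since it is the substantive step; everything afterwards is routine algebraic simplification of Equation~(\ref{upper}) together with the definitions of $z_{l,j}$ and $\tilde{z}_{l,j}$.
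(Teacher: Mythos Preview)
Your proposal is correct and follows essentially the same approach as the paper's own proof: observe that at convergence $\tilde{\bm{\alpha}}=\bm{\alpha}$ and $\tilde{\bm{\beta}}=\bm{\beta}$, hence $\Delta\bm{\alpha}=\bm{0}$, $\Delta\bm{\beta}=\bm{0}$, and $\tilde{z}_{l,j}=z_{l,j}$, so Equation~(\ref{upper}) gives $\overline{z}_{l,j}=z_{l,j}$. Your additional limiting-argument variant and the explicit justification of why the snapshot coincides with the iterate are more careful than the paper's terse version, but the substance is identical.
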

The above theorem suggests that the upper bound \(\overline{z}_{l,j}\) of \(z_{l,j}\) converges to the exact value of \(z_{l,j}\) if the algorithm converges.
This indicates that if \(\mu\gamma \geq z_{l,j}\) holds after convergence, \(\mu\gamma \geq \overline{z}_{l,j}\) always holds in Lemma~\ref{lemma_zero_vectors}.
When \(\mu\gamma \geq z_{l,j}\) holds,
we have \(\nabla\psi(\bm{\alpha} + \beta_{j} \bm{1}_{m} - \bm{c}_{j})_{[l]}=\bm{0}\) from the proof of Lemma~\ref{lemma_zero_vectors}.
Therefore, our upper bound can exactly identify all the gradient vectors that turn out to be zero vectors when the algorithm converges.

As for the lower bound, we have the following property:
\begin{theorem}[Convergence of Lower Bound]
\label{lemma_lower_error}
Let \(\underline{\epsilon}\) be an error bound defined as \(|z_{l,j}-\underline{z}_{l,j}|\).
Suppose that \(\bm{f}:=\bm{\alpha} + \beta_{j} \bm{1}_{m} - \bm{c}_{j}\).
If \(\bm{\alpha}\) and \(\bm{\beta}\) reaches convergence through a gradient-based algorithm, we have \(\underline{\epsilon}=| |\!|[\bm{f}_{[l]}]_{+}|\!|_{2} + |\!|[\bm{f}_{[l]}]_{-}|\!|_{2} - |\!|\bm{f}_{[l]}|\!|_{2} |\). 
\end{theorem}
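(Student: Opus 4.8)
The plan is to use a single structural observation: once the gradient-based algorithm has converged, the iterates $\bm{\alpha},\bm{\beta}$ no longer move, and since the snapshots $\tilde{\bm{\alpha}},\tilde{\bm{\beta}}$ are refreshed to the current iterates at regular intervals (line 15 of Algorithm~\ref{alg_fastot}), at convergence we must have $\tilde{\bm{\alpha}}=\bm{\alpha}$ and $\tilde{\bm{\beta}}=\bm{\beta}$, hence $\Delta\bm{\alpha}=\bm{\alpha}-\tilde{\bm{\alpha}}=\bm{0}$ and $\Delta\bm{\beta}=\bm{\beta}-\tilde{\bm{\beta}}=\bm{0}$. Everything else is substitution into Equation~(\ref{lower}).

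Concretely, I would first note that $\Delta\bm{\alpha}=\bm{0}$ forces $|\!|\Delta\bm{\alpha}_{[l]}|\!|_{2}=|\!|[\Delta\bm{\alpha}_{[l]}]_{-}|\!|_{2}=0$ and $\Delta\bm{\beta}=\bm{0}$ forces $|\!|\Delta\beta_{j}|\!|_{2}=|\!|[\Delta\beta_{j}]_{-}|\!|_{2}=0$, so all four correction terms in Equation~(\ref{lower}) vanish and $\underline{z}_{l,j}=\tilde{k}_{l,j}-\tilde{o}_{l,j}$. Next, because the snapshots coincide with the current iterates, Definition~\ref{def_lower} gives $\tilde{k}_{l,j}=|\!|(\bm{\alpha}+\beta_{j}\bm{1}_{m}-\bm{c}_{j})_{[l]}|\!|_{2}=|\!|\bm{f}_{[l]}|\!|_{2}$ and $\tilde{o}_{l,j}=|\!|[(\bm{\alpha}+\beta_{j}\bm{1}_{m}-\bm{c}_{j})_{[l]}]_{-}|\!|_{2}=|\!|[\bm{f}_{[l]}]_{-}|\!|_{2}$, so $\underline{z}_{l,j}=|\!|\bm{f}_{[l]}|\!|_{2}-|\!|[\bm{f}_{[l]}]_{-}|\!|_{2}$. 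Finally, since $z_{l,j}=|\!|[\bm{f}_{[l]}]_{+}|\!|_{2}$ by Definition~\ref{def_upper} (note $\bm{f}^{+}=\frac{1}{\gamma}[\bm{f}]_{+}$ does not appear here; $z_{l,j}$ uses $[\bm{f}]_{+}$ directly), I would compute $\underline{\epsilon}=|z_{l,j}-\underline{z}_{l,j}|=\big|\,|\!|[\bm{f}_{[l]}]_{+}|\!|_{2}-|\!|\bm{f}_{[l]}|\!|_{2}+|\!|[\bm{f}_{[l]}]_{-}|\!|_{2}\,\big|$, which is exactly the claimed identity.

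I do not expect a genuine obstacle: the whole argument is the $\Delta\bm{\alpha}=\Delta\bm{\beta}=\bm{0}$ simplification followed by rewriting the snapshots, mirroring the reasoning behind Theorem~\ref{lemma_upper_error}. The only point worth commenting on — and the reason the statement retains the absolute value while keeping it nonzero in general — is the contrast with the upper bound: because $[\bm{f}_{[l]}]_{+}$ and $[\bm{f}_{[l]}]_{-}$ have disjoint supports, one has $|\!|\bm{f}_{[l]}|\!|_{2}^{2}=|\!|[\bm{f}_{[l]}]_{+}|\!|_{2}^{2}+|\!|[\bm{f}_{[l]}]_{-}|\!|_{2}^{2}$ rather than the corresponding identity for the norms themselves, so the residual need not vanish; in fact, applying the triangle inequality to $\bm{f}_{[l]}=[\bm{f}_{[l]}]_{+}+[\bm{f}_{[l]}]_{-}$ shows $|\!|\bm{f}_{[l]}|\!|_{2}\le|\!|[\bm{f}_{[l]}]_{+}|\!|_{2}+|\!|[\bm{f}_{[l]}]_{-}|\!|_{2}$, so the argument of the absolute value is always nonnegative and the bars may be dropped if desired.
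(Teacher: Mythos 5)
Your proof is correct and follows essentially the same route as the paper's: at convergence the snapshots coincide with the iterates, so the correction terms in Equation~(\ref{lower}) vanish, giving \(\underline{z}_{l,j}=|\!|\bm{f}_{[l]}|\!|_{2}-|\!|[\bm{f}_{[l]}]_{-}|\!|_{2}\), and substituting \(z_{l,j}=|\!|[\bm{f}_{[l]}]_{+}|\!|_{2}\) yields the claimed identity. Your closing remark that the absolute value's argument is nonnegative (via the triangle inequality on \(\bm{f}_{[l]}=[\bm{f}_{[l]}]_{+}+[\bm{f}_{[l]}]_{-}\)) matches the paper's observation that this follows from Lemma~\ref{lemma_lower}.
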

Although the above theorem suggests that the error bound of the lower bound does not converge to zero,
this error bound has the following advantage:
\begin{corollary}[Convergence of Lower Bound]
\label{coro_lower_error}
If we have \([\bm{f}_{[l]}]_{+}=\bm{0}\) or \([\bm{f}_{[l]}]_{-}=\bm{0}\) for Theorem~\ref{lemma_lower_error},
we have \(\underline{\epsilon}=0\).
\end{corollary}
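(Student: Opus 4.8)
The plan is to start from the closed form of the limiting error established in Theorem~\ref{lemma_lower_error}: once $\bm{\alpha}$ and $\bm{\beta}$ have converged, $\underline{\epsilon} = \bigl|\, \|[\bm{f}_{[l]}]_{+}\|_{2} + \|[\bm{f}_{[l]}]_{-}\|_{2} - \|\bm{f}_{[l]}\|_{2} \,\bigr|$, and then show that the quantity inside the absolute value vanishes under either hypothesis. So the whole corollary reduces to a short algebraic computation with the $\ell_2$ norm of $\bm{f}_{[l]}$ and its positive and negative parts.

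The key structural observation I would use is that the elementwise positive and negative parts partition the coordinates of $\bm{f}_{[l]}$. For every index $i$, either $(\bm{f}_{[l]})_{i} \geq 0$, in which case $([\bm{f}_{[l]}]_{+})_{i} = (\bm{f}_{[l]})_{i}$ and $([\bm{f}_{[l]}]_{-})_{i} = 0$, or $(\bm{f}_{[l]})_{i} \leq 0$, in which case $([\bm{f}_{[l]}]_{+})_{i} = 0$ and $([\bm{f}_{[l]}]_{-})_{i} = (\bm{f}_{[l]})_{i}$; in both cases $(\bm{f}_{[l]})_{i} = ([\bm{f}_{[l]}]_{+})_{i} + ([\bm{f}_{[l]}]_{-})_{i}$, and the two parts have disjoint supports. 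This immediately yields the Pythagorean identity $\|\bm{f}_{[l]}\|_{2}^{2} = \|[\bm{f}_{[l]}]_{+}\|_{2}^{2} + \|[\bm{f}_{[l]}]_{-}\|_{2}^{2}$, which requires no additional assumptions.

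With this identity in hand, I would argue the two cases. If $[\bm{f}_{[l]}]_{+} = \bm{0}$, then $\|[\bm{f}_{[l]}]_{+}\|_{2} = 0$ and $\|\bm{f}_{[l]}\|_{2} = \|[\bm{f}_{[l]}]_{-}\|_{2}$, so $\|[\bm{f}_{[l]}]_{+}\|_{2} + \|[\bm{f}_{[l]}]_{-}\|_{2} - \|\bm{f}_{[l]}\|_{2} = 0$ and hence $\underline{\epsilon} = 0$. The case $[\bm{f}_{[l]}]_{-} = \bm{0}$ is identical with the roles of the positive and negative parts exchanged. Combining the two cases gives the claim.

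I do not expect a real obstacle here; the only points requiring care are invoking Theorem~\ref{lemma_lower_error} so that $\underline{\epsilon}$ has already collapsed to the stated form at convergence, and making sure the disjoint-support decomposition is stated coordinatewise so the Pythagorean step is unconditional. After that the corollary follows by a one-line substitution.
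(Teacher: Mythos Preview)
Your proposal is correct and essentially matches the paper's proof: both start from the expression for $\underline{\epsilon}$ given by Theorem~\ref{lemma_lower_error} and then observe that when one of $[\bm{f}_{[l]}]_{+}$ or $[\bm{f}_{[l]}]_{-}$ vanishes, $\|\bm{f}_{[l]}\|_{2}$ coincides with the norm of the remaining part, so the expression collapses to zero. The paper phrases this as the equality case of the triangle inequality applied to $\bm{f}_{[l]}=[\bm{f}_{[l]}]_{+}+[\bm{f}_{[l]}]_{-}$, whereas you phrase it via the disjoint-support (Pythagorean) decomposition; these are the same elementary observation.
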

The above corollary indicates that the lower bound is tight when all the elements in \(\bm{f}_{[l]}\) are positive or negative.
This suggests that when a lot of gradient vectors turn out to be zero vectors during optimization, the lower bound is expected to be tight since \([\bm{f}_{[l]}]_{+}=\bm{0}\) will hold in many cases.
For nonzero gradient vectors, we can obtain a tight bound if all the elements in the gradient vector are nonzero because \([\bm{f}_{[l]}]_{-}=\bm{0}\) holds in Corollary~\ref{coro_lower_error}.

\section{Related Work}
To handle the group structure of the transportation plan, \citet{courty2014domain} utilized entropic and \(\ell_p-\ell_1\) regularization terms with \(p<1\).
Although the regularization term made the objective function nonconvex, they solved the optimization problem by using a majoration-minimization algorithm.
However, the method is only guaranteed to converge to local stationary points.
This drawback can be overcome by using a \(\ell_1-\ell_2\) regularization term instead of the  \(\ell_p-\ell_1\) regularization term \cite{courty2017optimal}.
This approach outperformed the previous method \cite{courty2014domain} on many benchmark tasks \cite{courty2017optimal} and its regularization term is now widely used to enable transportation plans to handle group structure \cite{redko2017theoretical,das2018sample,das2018unsupervised,li2020optimal,wang2021cross}.
However, it does not achieve group sparsity, as pointed out in the previous work \cite{blondel2018smooth}.
This is because the logarithm in the entropic regularization term keeps the values of the transportation plan in the strictly positive orthant.
Therefore, it is difficult to obtain a sparse transportation plan by using this approach.
On the other hand, \citet{blondel2018smooth} proposed another group-sparse regularizer, as we described in relation to Equation~(\ref{eq:reg_gl}).
Since this regularizer leverages the squared 2-norm and \(\ell_1-\ell_2\) regularization terms, it can avoid the above limitation.
Namely, their regularization term truly achieves group sparsity owing to the soft-thresholding function of Equation~(\ref{eq:gradient}).
In addition, the gradients can be computed in a closed-expression, and we can use various solvers on the optimization problem, as we explained in the preliminary section.
However, the cost of computing the gradient tends to be high when we handle large datasets  because it requires \(\mathcal{O}(|\mathcal{L}|ng)\) time to compute the gradient vectors for each iteration until convergence.

\section{Experiment}
We evaluated the processing time and accuracy to confirm the efficiency and effectiveness of our method.
\begin{figure}[t!]
\begin{center}
\includegraphics[viewport = 0.000000 0.000000 720.000000 360.000000, width=\columnwidth]{./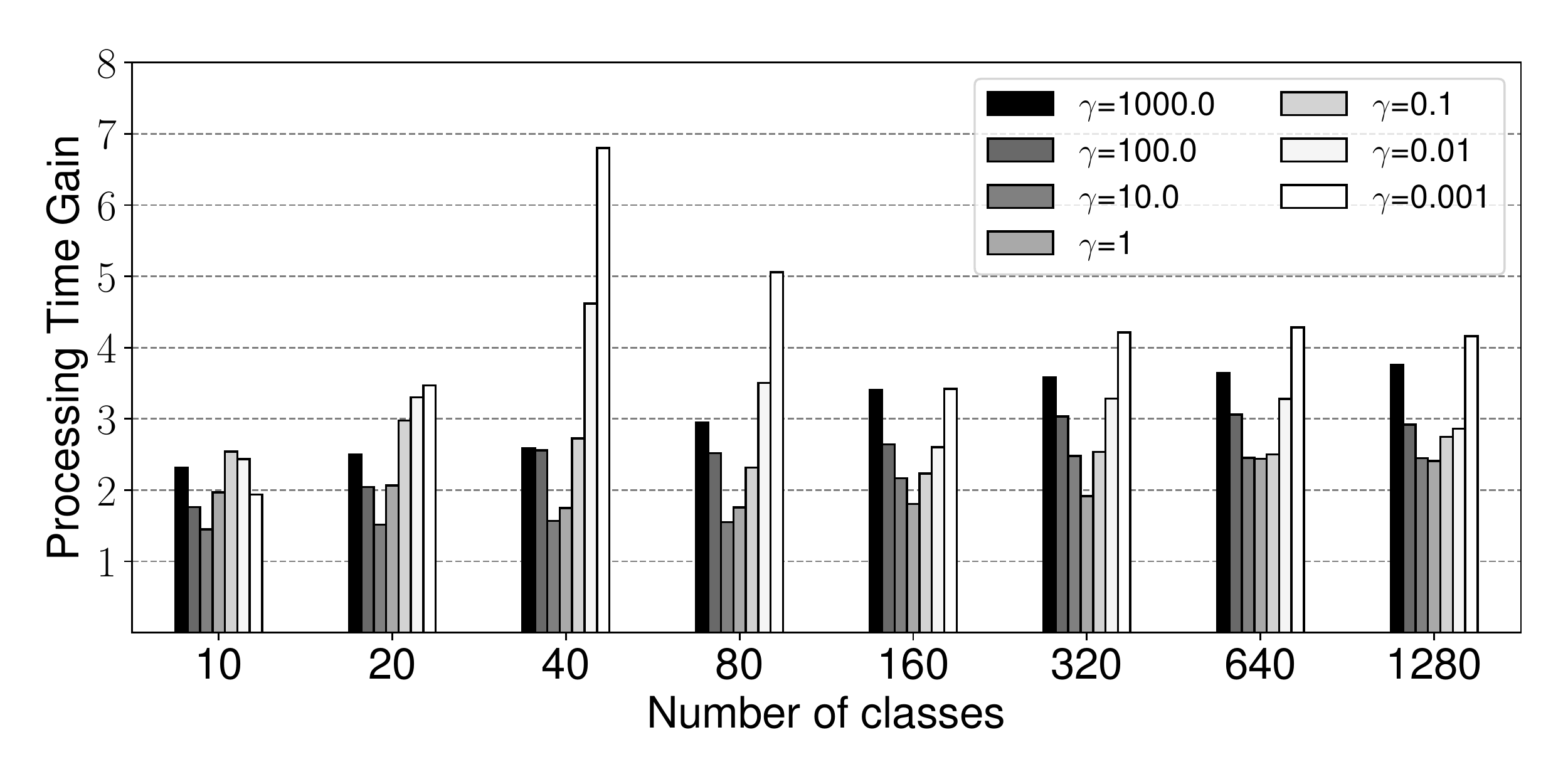} \\
 \caption{Processing time gain for each hyper parameter when the numbers of classes change.}
 \label{gammas_bar}
\end{center}
\end{figure}
\begin{figure}[t!]
\begin{center}
\includegraphics[viewport = 0.000000 0.000000 720.000000 360.000000, width=\columnwidth]{./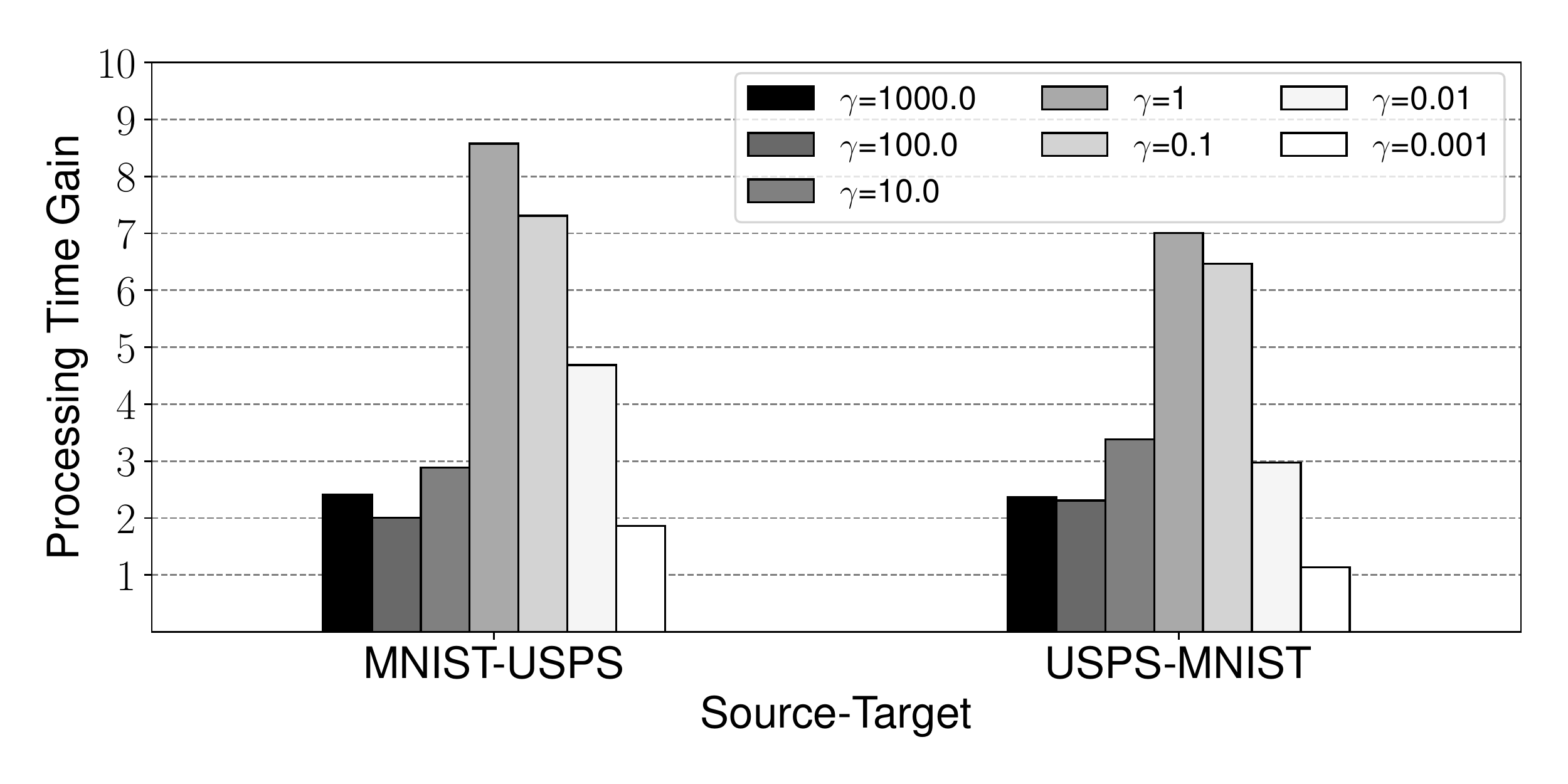} \\
 \caption{Processing time gain of 2 adaptation tasks on digit recognition.}
 \label{gammas_digit}
\end{center}
\end{figure}

\subsection{Datasets}
We created \(X^{S}\), \(X^{T}\), and \(\bm{y}^{S}\) from the following datasets including a synthetic dataset and visual adaptation datasets in accordance with the previous work \cite{courty2017optimal}:

\noindent{\bf Synthetic dataset with controlled number of class labels.} 
We used a simulated dataset with controlled numbers of class labels and data samples to show the efficiency.
We increased the number of class labels \(|\mathcal{L}|\) from 10 to 1,280 for \(X^{S}\).
The number of dimensions for each data sample was two.
Each class had ten data samples (\(g=10\)), which were generated from a standard normal distribution with a different mean for each class.
The means were computed as \((l\times 5.0, -5.0)\) for \(X^{S}\) and \((l\times 5.0, 5.0)\) for \(X^{T}\) where \(l\in \mathcal{L}\).
The labels \(l\) of \(X^{T}\) were only used to generate \(X^{T}\) and not used for the optimization.
Note that the numbers of data samples \(m\) and \(n\) automatically increased from 100 to 12,800 because we set \(n=m\) and \(m=|\mathcal{L}|g\).~\\
\noindent{\bf Digit recognition.} 
We used the digits datasets: USPS (U) \cite{hull1994database} and MNIST (M) \cite{lecun1998gradient} as \(X^{S}\) and \(X^{T}\).
Both datasets have ten class labels of digits.
We randomly sampled \(5,000\) images from each dataset.
The images in the datasets were resized to \(16\times16\).~\\    
\noindent{\bf Face recognition.}
We used the PIE dataset for the face recognition task \cite{gross2008multi}.
It contains \(32\times32\) images of 68 individuals taken under various conditions.
The number of classes is 68.
We used four domains in the dataset:
PIE05 (P5), PIE07 (P7), PIE09 (P9), and PIE29 (P29).
We created the combination of \(X^{S}\) and \(X^{T}\) by choosing two domains from these four domains.
As a result, we had 12 transportation problems for this dataset.
The numbers of images are 3332 (P5), 1629 (P7), 1632 (P9), and 1632 (P29).~\\ 
\noindent{\bf Object recognition.}
We used the Caltech-Office dataset for the object recognition task with ten class labels \cite{griffin2007caltech,gong12geodesic}.
The dataset consisted of four domains: {\em Caltech-256} (C), {\em Amazon} (A), {\em Webcam} (W) and {\em DSLR} (D).
Therefore, we had 12 transportation problems for this dataset.
The samples numbered 1123, 958, 295 and 157, respectively.
We used DeCAF\(_{6}\) \cite{donahue2014decaf} as the feature vectors.
They are activations of the fully connected layer of a convolutional neural network trained on ILSVRC-12.
The size of the vectors was 4096.
\begin{figure}[t!]
\begin{center}
\includegraphics[viewport = 0.000000 0.000000 720.000000 360.000000, width=\columnwidth]{./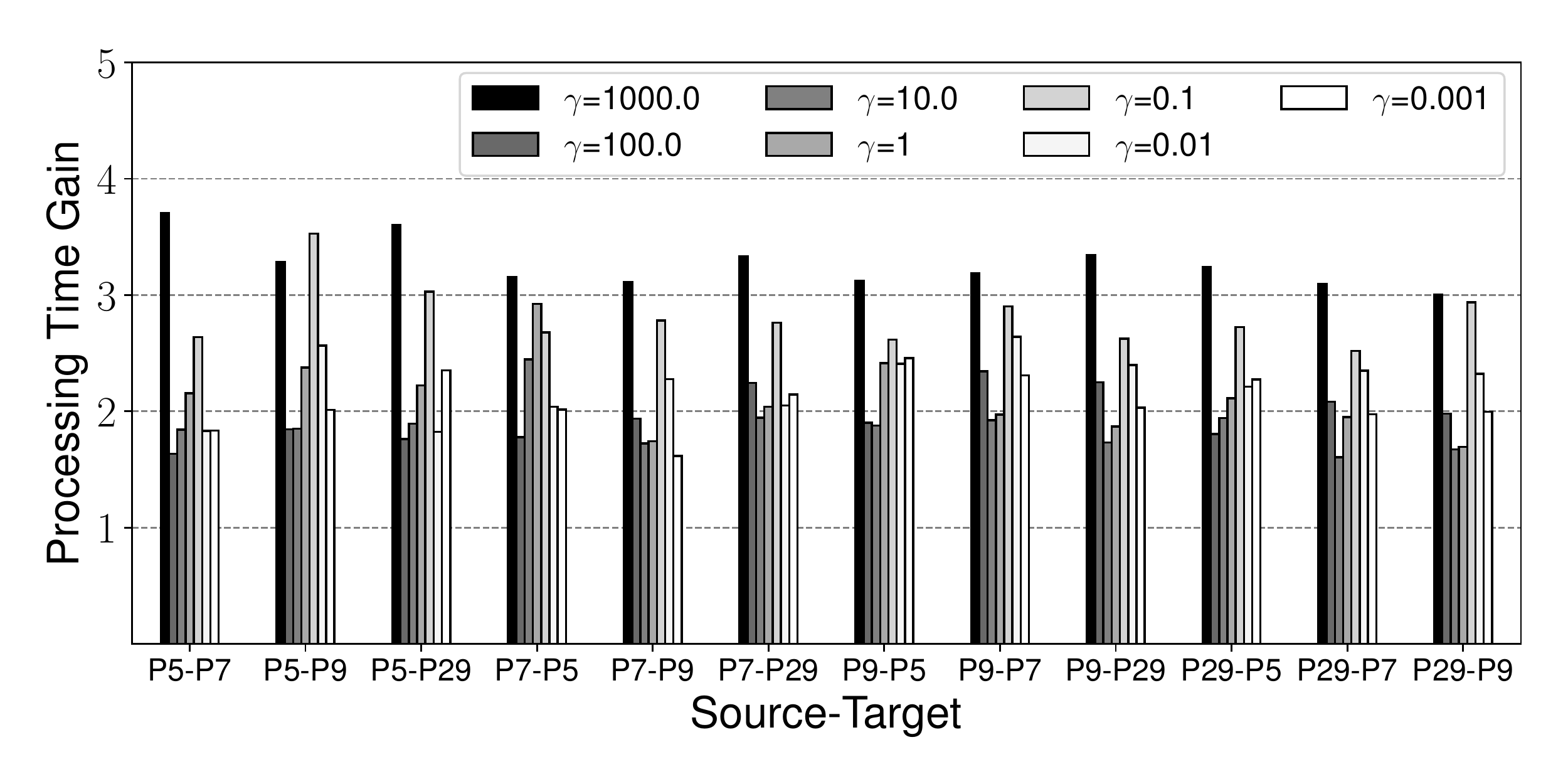} \\
 \caption{Processing time gain of 12 adaptation tasks on face recognition.}
 \label{gammas_pie}
\end{center}
\end{figure}
\begin{figure}[t!]
\begin{center}
\includegraphics[viewport = 0.000000 0.000000 720.000000 360.000000, width=\columnwidth]{./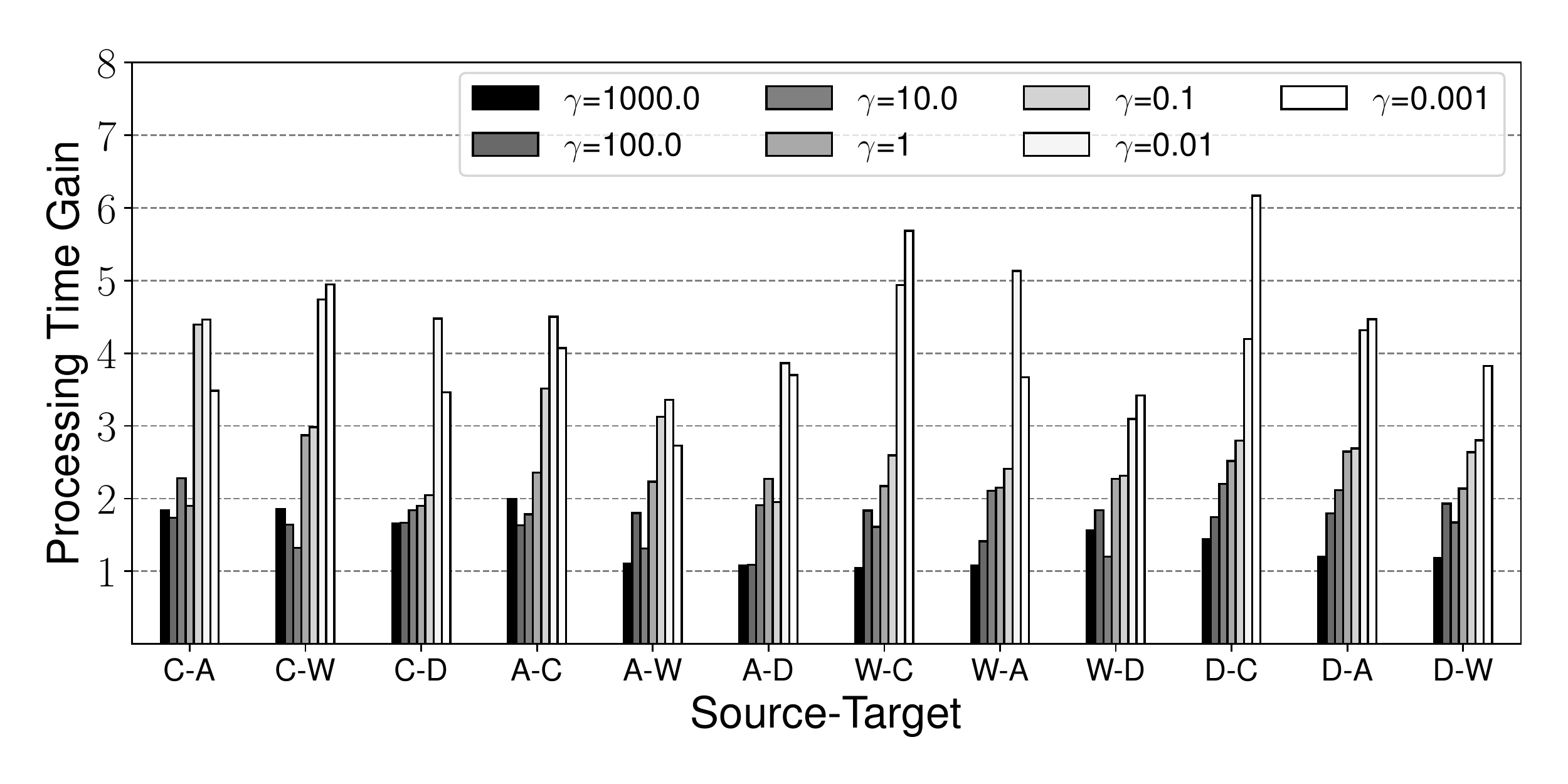} \\
 \caption{Processing time gain of 12 adaptation tasks on object recognition.}
 \label{gammas_caltech}
\end{center}
\end{figure}

\subsection{Experimental Setup}
We evaluated the processing time of solving Problem~(\ref{eq:smoothed_dual}) between different domains.
By following the implementation of the previous work \cite{blondel2018smooth}, we used L-BFGS and the hyperparameter \(\rho\in [0,1]\) instead of \(\mu\) in Equation~(\ref{eq:reg_gl}) to balance the regularization terms.
Namely, we utilized \(\Psi(\bm{t}_{j})=\gamma(\frac{1}{2}(1-\rho) |\!|\bm{t}_{j}|\!|_{2}^{2} + \rho \sum_{l\in\mathcal{L}}|\!|\bm{t}_{j[l]}|\!|_{2})\) instead of Equation~(\ref{eq:reg_gl}).
We evaluated the processing time and accuracy on combinational settings of the hyperparameters \(\rho=\{0.2, 0.4, 0.6, 0.8\}\) and \(\gamma=\{10^{3}, 10^{2}, 10^{1}, 10^{0}, 10^{-1}, 10^{-2}, 10^{-3}\}\) by following the previous work \cite{courty2017optimal,blondel2018smooth}.
Finally, we evaluated the total processing time of \(\rho=\{0.2, 0.4, 0.6, 0.8\}\) for each \(\gamma\) because \(\gamma\) adjusts the strength of the overall regularization terms.
We compared our method (ours) with the original method (origin) \cite{blondel2018smooth}.
Although we also tested the another method \cite{courty2017optimal}, we excluded it from the comparison since results could not be obtained for most of the hyperparameters.
This was due to the numerical instability of the Sinkhorn algorithm, as pointed out in the previous work \cite{schmitzer2019stabilized}.
In addition, that method could not achieve group sparsity due to the choice of the regularization term \cite{blondel2018smooth}.
Each experiment was conducted with one CPU core and 264 GB of main memory on a 2.20 GHz Intel Xeon server running Linux.

\subsection{Processing Time}
Figure~\ref{gammas_bar} shows the processing time gain on the synthetic dataset.
Our method is up to 6.8 times faster than the original method.
The gain increases as the numbers of class labels and data samples increase.
This is because our method efficiently skips the gradient computations corresponding to the class labels.
On the other hand, the checking procedure for skipping the gradient computations may become dominant when the numbers of class labels and data samples are small.
In this case, our second idea of reducing the overhead works.
As a result, our method turns out to be about twice as fast as the original method even with ten class labels.
We confirmed that our method without the second idea was slightly slower than the existing method when the number of class labels was 10.
The result suggests that the second idea helps to reduce the overhead especially for small numbers of class labels and data samples.

Figures~\ref{gammas_digit}, \ref{gammas_pie}, and \ref{gammas_caltech} show the processing time gain on the datasets for the digit, face, and object recognition tasks.
In these cases, our method is up to 8.6, 3.7, 6.2 times faster than the original method in each adaptation task.
The results suggest that our method is efficient even on real-world datasets.

Intuitively, our method has a large gain when \(\gamma\) is large because such a setting induces a sparse transportation plan, and the inequality in Lemma~\ref{lemma_upper} easily holds.
However, these figures suggest that the trend of gain for each \(\gamma\) is quite different depending on the dataset.
This is because some cases converge in a few iterations depending on the value of \(\gamma\) and dataset.
In such cases, the gain decreases since the number of gradient computations inherently small.

\subsubsection{Number of Gradient Computations.}
The aim of our idea is to skip gradient computations.
Therefore, we compared the number of gradient computations for the original method and our method.
Figure~\ref{count} shows the results for each \(\rho\) on MNIST-USPS dataset with \(\gamma=0.1\).
Our method could reduce the number of gradient computations by up to \(4.22\%\).
When the value of \(\rho\) is large, the magnitude of group-sparse regularization terms also becomes large.
Since group-sparsity is aggressively induced in such a setting, our method actually skips many gradient computations for \(\rho=0.8\) in Figure~\ref{count}.
This figure suggests that our method efficiently skips unnecessary gradient computations.
\begin{figure}[t!]
\begin{center}
\includegraphics[viewport = 0.000000 0.000000 720.000000 360.000000, width=\columnwidth]{./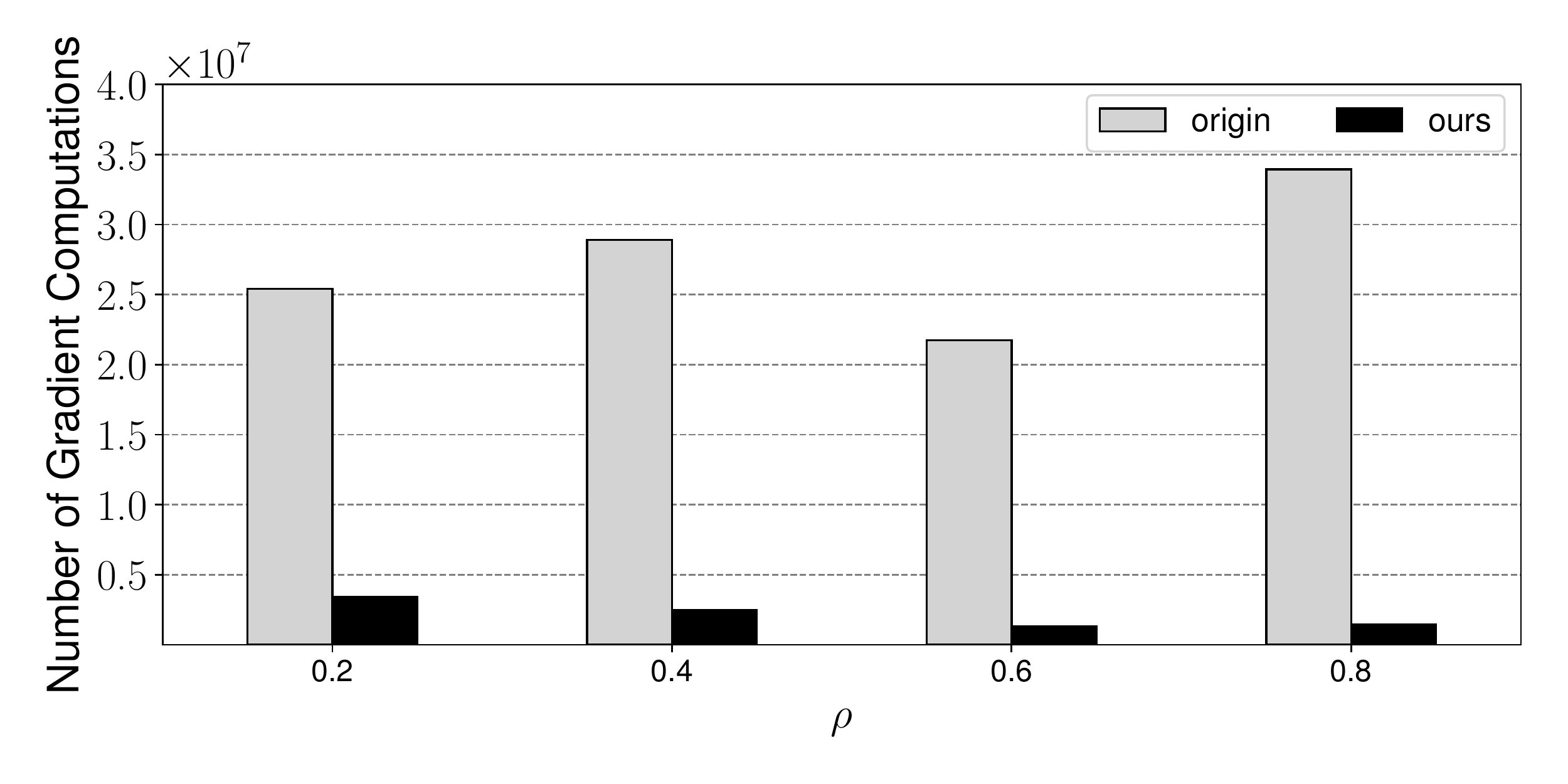} \\
 \caption{Numbers of gradient computations for each \(\rho\) on MNIST-USPS dataset with \(\gamma=0.1\).}
 \label{count}
\end{center}
\end{figure}

\begin{table}[t!]
\begin{center}
{
		\small
		\begin{tabular}{ccc}
		\toprule
			{Number of classes} & {Origin} & {Ours} \\
			\midrule
			10   & \(2.458\times10^{2}\) & \(2.458\times10^{2}\) \\
			20   & \(2.530\times10^{2}\) & \(2.530\times10^{2}\) \\
			40   & \(2.529\times10^{2}\) & \(2.529\times10^{2}\) \\
			80   & \(2.529\times10^{2}\) & \(2.529\times10^{2}\) \\
			160   & \(2.455\times10^{2}\) & \(2.455\times10^{2}\) \\
			320   & \(2.530\times10^{2}\) & \(2.530\times10^{2}\) \\
			640   & \(1.897\times10^{2}\) & \(1.897\times10^{2}\) \\
			1280   & \(2.529\times10^{2}\) & \(2.529\times10^{2}\) \\
			\bottomrule
		\end{tabular}		
}
\end{center}
\caption{Maximum objective values after convergence among all hyperparameters on the synthetic dataset.}
\label{tab:objective1}
\end{table}

\subsection{Accuracy} 
We also examined the values of the objective function of Problem~(\ref{eq:smoothed_dual}) after convergence to verify Theorem~\ref{thm_result}.
Here, we will mainly show results for the maximum objective function values after convergence among all hyperparameter combinations because Problem~(\ref{eq:smoothed_dual}) is a maximization problem.
The results on the synthetic dataset are listed in Table~\ref{tab:objective1}.
Our method achieves the same maximum objective values as those of the original method on all datasets for all hyperparameter combinations.
These experimental results verify our theoretical results and suggest that our method reduces the processing time without degrading accuracy.

\section{Conclusion}
We proposed fast regularized discrete optimal transport with group-sparse regularizers.
Our method exploits with two ideas.
The first idea is to safely skip the gradient computations whose gradient vectors must turn out to be zero vectors.
The second idea is to extract the gradient vectors that are expected to be nonzero.
Our method is guaranteed to return the same value of the objective function as that of the original method.
Experiments show that it is up to 8.6 times faster than the original method without degrading accuracy.

\bibliography{aaai23}

\begin{thebibliography}{28}
\providecommand{\natexlab}[1]{#1}

\bibitem[{Alaya et~al.(2019)Alaya, Berar, Gasso, and
  Rakotomamonjy}]{alaya2019screening}
Alaya, M.~Z.; Berar, M.; Gasso, G.; and Rakotomamonjy, A. 2019.
\newblock Screening {S}inkhorn {A}lgorithm for {R}egularized {O}ptimal
  {T}ransport.
\newblock In \emph{Advances in Neural Information Processing Systems
  (NeurIPS)}.

\bibitem[{Bertsekas(1999)}]{bertsekas1999nonlinear}
Bertsekas, D.~P. 1999.
\newblock \emph{Nonlinear {P}rogramming}.
\newblock Athena Scientific.

\bibitem[{Blondel, Seguy, and Rolet(2018)}]{blondel2018smooth}
Blondel, M.; Seguy, V.; and Rolet, A. 2018.
\newblock Smooth and {S}parse {O}ptimal {T}ransport.
\newblock In \emph{International {C}onference on {A}rtificial {I}ntelligence
  and {S}tatistics ({AISTATS})}.

\bibitem[{Courty, Flamary, and Tuia(2014)}]{courty2014domain}
Courty, N.; Flamary, R.; and Tuia, D. 2014.
\newblock Domain {A}daptation with {R}egularized {O}ptimal {T}ransport.
\newblock In \emph{Machine Learning and Knowledge Discovery in Databases -
  European Conference, {ECML} {PKDD}}.

\bibitem[{Courty et~al.(2017)Courty, Flamary, Tuia, and
  Rakotomamonjy}]{courty2017optimal}
Courty, N.; Flamary, R.; Tuia, D.; and Rakotomamonjy, A. 2017.
\newblock Optimal {T}ransport for {D}omain {A}daptation.
\newblock \emph{{IEEE} {T}ransactions on {P}attern {A}nalysis and {M}achine
  {I}ntelligence}, 39(9): 1853--1865.

\bibitem[{Cuturi(2013)}]{cuturi2013sinkhorn}
Cuturi, M. 2013.
\newblock Sinkhorn {D}istances: {L}ightspeed {C}omputation of {O}ptimal
  {T}ransport.
\newblock In \emph{Advances in {N}eural {I}nformation {P}rocessing {S}ystems
  ({NeurIPS})}.

\bibitem[{Das and Lee(2018{\natexlab{a}})}]{das2018sample}
Das, D.; and Lee, C. S.~G. 2018{\natexlab{a}}.
\newblock Sample-to-{S}ample {C}orrespondence for {U}nsupervised {D}omain
  {A}daptation.
\newblock \emph{Engineering Applications of Artificial Intelligence}, 73:
  80--91.

\bibitem[{Das and Lee(2018{\natexlab{b}})}]{das2018unsupervised}
Das, D.; and Lee, C. S.~G. 2018{\natexlab{b}}.
\newblock Unsupervised {D}omain {A}daptation {U}sing {R}egularized
  {H}yper-{G}raph {M}atching.
\newblock In \emph{{IEEE} International Conference on Image Processing
  ({ICIP})}.

\bibitem[{Donahue et~al.(2014)Donahue, Jia, Vinyals, Hoffman, Zhang, Tzeng, and
  Darrell}]{donahue2014decaf}
Donahue, J.; Jia, Y.; Vinyals, O.; Hoffman, J.; Zhang, N.; Tzeng, E.; and
  Darrell, T. 2014.
\newblock {D}e{CAF}: {A} {D}eep {C}onvolutional {A}ctivation {F}eature for
  {G}eneric {V}isual {R}ecognition.
\newblock In \emph{International {C}onference on {M}achine {L}earning
  ({ICML})}.

\bibitem[{Fujiwara et~al.(2016{\natexlab{a}})Fujiwara, Ida, Arai, Nishimura,
  and Iwamura}]{fujiwara2016fast}
Fujiwara, Y.; Ida, Y.; Arai, J.; Nishimura, M.; and Iwamura, S.
  2016{\natexlab{a}}.
\newblock Fast {A}lgorithm for the {L}asso based {L}1-{G}raph {C}onstruction.
\newblock \emph{Proc. {VLDB} Endow.}, 10(3): 229--240.

\bibitem[{Fujiwara et~al.(2016{\natexlab{b}})Fujiwara, Ida, Shiokawa, and
  Iwamura}]{fujiwara2016fast2}
Fujiwara, Y.; Ida, Y.; Shiokawa, H.; and Iwamura, S. 2016{\natexlab{b}}.
\newblock Fast {L}asso {A}lgorithm via {S}elective {C}oordinate {D}escent.
\newblock In \emph{Proceedings of the AAAI Conference on Artificial
  Intelligence}.

\bibitem[{Gangbo and Mc{C}ann(2000)}]{gangbo2006shape}
Gangbo, W.; and Mc{C}ann, R.~J. 2000.
\newblock Shape {R}ecognition via {W}asserstein {D}istance.
\newblock \emph{Quarterly of Applied Mathematics}, 58(4): 705--737.

\bibitem[{Gong et~al.(2012)Gong, Shi, Sha, and Grauman}]{gong12geodesic}
Gong, B.; Shi, Y.; Sha, F.; and Grauman, K. 2012.
\newblock Geodesic {F}low {K}ernel for {U}nsupervised {D}omain {A}daptation.
\newblock In \emph{{IEEE} {C}onference on {C}omputer {V}ision and {P}attern
  {R}ecognition ({CVPR})}.

\bibitem[{Griffin, Holub, and Perona(2007)}]{griffin2007caltech}
Griffin, G.; Holub, A.; and Perona, P. 2007.
\newblock {C}altech-256 {O}bject {C}ategory {D}ataset.
\newblock Technical report, {C}alifornia {I}nstitute of {T}echnology.

\bibitem[{Gross et~al.(2008)Gross, Matthews, Cohn, Kanade, and
  Baker}]{gross2008multi}
Gross, R.; Matthews, I.; Cohn, J.; Kanade, T.; and Baker, S. 2008.
\newblock Multi-{PIE}.
\newblock In \emph{IEEE International Conference on Automatic Face \& Gesture
  Recognition}.

\bibitem[{Hull(1994)}]{hull1994database}
Hull, J.~J. 1994.
\newblock A {D}atabase for {H}andwritten {T}ext {R}ecognition {R}esearch.
\newblock \emph{IEEE Transactions on Pattern Analysis and Machine
  Intelligence}, 16(5): 550--554.

\bibitem[{Ida, Fujiwara, and Kashima(2019)}]{ida2019fast}
Ida, Y.; Fujiwara, Y.; and Kashima, H. 2019.
\newblock Fast {S}parse {G}roup {L}asso.
\newblock In \emph{Advances in Neural Information Processing Systems
  ({NeurIPS})}.

\bibitem[{Ida et~al.(2020)Ida, Kanai, Fujiwara, Iwata, Takeuchi, and
  Kashima}]{ida2020fast}
Ida, Y.; Kanai, S.; Fujiwara, Y.; Iwata, T.; Takeuchi, K.; and Kashima, H.
  2020.
\newblock Fast {D}eterministic {CUR} {M}atrix {D}ecomposition with {A}ccuracy
  {A}ssurance.
\newblock In \emph{Proceedings of International Conference on Machine Learning
  ({ICML})}.

\bibitem[{Lecun et~al.(1998)Lecun, Bottou, Bengio, and
  Haffner}]{lecun1998gradient}
Lecun, Y.; Bottou, L.; Bengio, Y.; and Haffner, P. 1998.
\newblock Gradient-based {L}earning {A}pplied to {D}ocument {R}ecognition.
\newblock \emph{Proceedings of the IEEE}, 86(11): 2278--2324.

\bibitem[{Li et~al.(2020)Li, Ni, Zhu, Song, and Wu}]{li2020optimal}
Li, P.; Ni, Z.; Zhu, X.; Song, J.; and Wu, W. 2020.
\newblock Optimal {T}ransport with {D}imensionality {R}eduction for {D}omain
  {A}daptation.
\newblock \emph{Symmetry}, 12(12): 1994.

\bibitem[{Liu and Nocedal(1989)}]{liu1989on}
Liu, D.~C.; and Nocedal, J. 1989.
\newblock On the {L}imited {M}emory {BFGS} {M}ethod for {L}arge {S}cale
  {O}ptimization.
\newblock \emph{Mathematical Programming}, 45(1): 503--528.

\bibitem[{Lu et~al.(2021)Lu, Chen, Wang, and Qin}]{wang2021cross}
Lu, W.; Chen, Y.; Wang, J.; and Qin, X. 2021.
\newblock Cross-domain {A}ctivity {A}ecognition via {S}ubstructural {O}ptimal
  {T}ransport.
\newblock \emph{Neurocomputing}, 454: 65--75.

\bibitem[{Pitié, Kokaram, and Dahyot(2007)}]{pitie2007autimated}
Pitié, F.; Kokaram, A.~C.; and Dahyot, R. 2007.
\newblock Automated {C}olour {G}rading {U}sing {C}olour {D}istribution
  {T}ransfer.
\newblock \emph{{C}omputer {V}ision and {I}mage {U}nderstanding}, 107(1-2):
  123--137.

\bibitem[{Redko, Habrard, and Sebban(2017)}]{redko2017theoretical}
Redko, I.; Habrard, A.; and Sebban, M. 2017.
\newblock Theoretical {A}nalysis of {D}omain {A}daptation with {O}ptimal
  {T}ransport.
\newblock In \emph{Machine Learning and Knowledge Discovery in Databases -
  European Conference ({ECML} {PKDD})}.

\bibitem[{Russakovsky et~al.(2015)Russakovsky, Deng, Su, Krause, Satheesh, Ma,
  Huang, Karpathy, Khosla, Bernstein, Berg, and Fei-Fei}]{olga2015imagnet}
Russakovsky, O.; Deng, J.; Su, H.; Krause, J.; Satheesh, S.; Ma, S.; Huang, Z.;
  Karpathy, A.; Khosla, A.; Bernstein, M.; Berg, A.~C.; and Fei-Fei, L. 2015.
\newblock Image{N}et {L}arge {S}cale {V}isual {R}ecognition {C}hallenge.
\newblock \emph{International Journal of Computer Vision ({IJCV})}, 115(3):
  211--252.

\bibitem[{Schmitzer(2019)}]{schmitzer2019stabilized}
Schmitzer, B. 2019.
\newblock Stabilized {S}parse {S}caling {A}lgorithms for {E}ntropy
  {R}egularized {T}ransport {P}roblems.
\newblock \emph{{SIAM} Journal on Scientific Computing}, 41(3): A1443--A1481.

\bibitem[{Venturini, Baralis, and Garza(2017)}]{luca2017scaling}
Venturini, L.; Baralis, E.; and Garza, P. 2017.
\newblock Scaling {A}ssociative {C}lassification for {V}ery {L}arge {D}atasets.
\newblock \emph{Jornal of Big Data}, 4(44).

\bibitem[{Yuan and Lin(2006)}]{yuan2006model}
Yuan, M.; and Lin, Y. 2006.
\newblock Model {S}election and {E}stimation in {R}egression with {G}rouped
  {V}ariables.
\newblock \emph{Journal of the Royal Statistical Society}, 68(1): 49--67.

\end{thebibliography}

\newpage
\appendix
\setcounter{secnumdepth}{1}
\renewcommand{\thesection}{\Alph{section}}

 \makeatletter
    \renewcommand{\theequation}{%
    \thesection.\arabic{equation}}
    \@addtoreset{equation}{section}
\makeatother

\setcounter{table}{0}
\renewcommand{\thetable}{\Alph{table}}
\setcounter{figure}{0}
\renewcommand{\thefigure}{\Alph{figure}}
\setcounter{lemma}{0}
\renewcommand{\thelemma}{\Alph{lemma}}

\section{Proof of Lemma~\ref{lemma_upper}}
%\section{Proof of Lemma~1}
\begin{proof}
From the definition of \(z_{l,j}\), we obtain the following equation:
\begin{eqnarray}
z_{l,j} &=& |\!|[\bm{\alpha}_{[l]} + \beta_{j} \bm{1}_{g_l} - \bm{c}_{j[l]}]_{+}|\!|_{2} \nonumber \\
&=& |\!|[(\tilde{\bm{\alpha}} + \tilde{\beta}_{j} \bm{1}_{m} - \bm{c}_{j})_{[l]}+\Delta \bm{\alpha}_{[l]}+\Delta \beta_{j}\bm{1}_{g_l}]_{+}|\!|_{2}.  \nonumber
\end{eqnarray}
Here, \(0\leq\max(p+q, 0)\leq \max(p, 0)+\max(q, 0)\) holds.
From this inequality and the triangle inequality, we obtain the following upper bound if \(\bm{1}_{g_l}\) is a \(g_l\)-dimensional vector whose elements are ones:
\begin{eqnarray}
\!\!\!\!\!\!\!\!\!\!&&z_{l,j}\leq|\!|[(\tilde{\bm{\alpha}}\!+\!\tilde{\beta}_{j} \bm{1}_{m}\!-\!\bm{c}_{j})_{[l]}]_{+}\!+\![\Delta \bm{\alpha}_{[l]}]_{+}\!+\![\Delta \beta_{j}\bm{1}_{g_l}]_{+}|\!|_{2} \nonumber \\
\!\!\!\!\!\!\!\!\!\!&&\leq|\!|[(\tilde{\bm{\alpha}}\!+\!\tilde{\beta}_{j} \bm{1}_{m}\!-\!\bm{c}_{j})_{[l]}]_{+}|\!|_{2}\!+\!|\!|[\Delta \bm{\alpha}_{[l]}]_{+}|\!|_{2}\!+\!|\!|[\Delta \beta_{j}\bm{1}_{g_l}]_{+}|\!|_{2} \nonumber \\
\!\!\!\!\!\!\!\!\!\!&&=\tilde{z}_{l,j}+|\!|[\Delta \bm{\alpha}_{[l]}]_{+}|\!|_{2}+\sqrt{g_l}[\Delta \beta_{j}]_{+}= \overline{z}_{l,j}, \nonumber
\end{eqnarray}
which completes the proof.
\qed
\end{proof}

\section{Proof of Lemma~\ref{lemma_zero_vectors}}
%\section{Proof of Lemma~2}
Before we prove Lemma~\ref{lemma_zero_vectors}, we prove the following lemma:
%Before we prove Lemma~2, we prove the following lemma:
\begin{lemma}
\label{lemma_condition}
When \(\mu\gamma \geq z_{l,j}\) holds,
we obtain \(\nabla\psi(\bm{\alpha} + \beta_{j} \bm{1}_{m} - \bm{c}_{j})_{[l]}=\bm{0}\).
\end{lemma}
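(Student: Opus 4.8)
The plan is to prove Lemma~\ref{lemma_condition} directly from the closed-form expression for the gradient in Equation~(\ref{eq:gradient}), and then obtain Lemma~\ref{lemma_zero_vectors} as an immediate consequence by chaining the upper bound of Lemma~\ref{lemma_upper}.

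First I would recall that by Equation~(\ref{eq:gradient}), with $\bm{f} := \bm{\alpha} + \beta_{j}\bm{1}_{m} - \bm{c}_{j}$ and $\bm{f}^{+} = \frac{1}{\gamma}[\bm{f}]_{+}$, we have
\begin{eqnarray}
\nabla\psi(\bm{f})_{[l]} = \left[1 - \mu/\|\bm{f}^{+}_{[l]}\|_{2}\right]_{+}\bm{f}^{+}_{[l]}. \nonumber
\end{eqnarray}
The key is to rewrite $\|\bm{f}^{+}_{[l]}\|_{2}$ in terms of $z_{l,j}$. Since $\bm{f}^{+}_{[l]} = \frac{1}{\gamma}[\bm{f}_{[l]}]_{+} = \frac{1}{\gamma}[(\bm{\alpha}+\beta_{j}\bm{1}_{m}-\bm{c}_{j})_{[l]}]_{+}$ and $\gamma > 0$, we get $\|\bm{f}^{+}_{[l]}\|_{2} = \frac{1}{\gamma}\|[(\bm{\alpha}+\beta_{j}\bm{1}_{m}-\bm{c}_{j})_{[l]}]_{+}\|_{2} = z_{l,j}/\gamma$, using the definition of $z_{l,j}$ in Definition~\ref{def_upper}. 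Hence the thresholding factor becomes $[1 - \mu\gamma/z_{l,j}]_{+}$. Now if $\mu\gamma \geq z_{l,j}$, then $\mu\gamma/z_{l,j} \geq 1$, so $1 - \mu\gamma/z_{l,j} \leq 0$ and therefore $[1 - \mu\gamma/z_{l,j}]_{+} = 0$, which forces $\nabla\psi(\bm{f})_{[l]} = \bm{0}$. One small point to handle carefully is the degenerate case $z_{l,j} = 0$: then $\bm{f}^{+}_{[l]} = \bm{0}$ already, so the gradient is the zero vector directly (and $\mu\gamma \geq 0 = z_{l,j}$ holds trivially), so the conclusion is still valid. This completes the proof of Lemma~\ref{lemma_condition}.

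With Lemma~\ref{lemma_condition} in hand, Lemma~\ref{lemma_zero_vectors} follows in one line: by Lemma~\ref{lemma_upper} we have $\overline{z}_{l,j} \geq z_{l,j}$, so the hypothesis $\mu\gamma \geq \overline{z}_{l,j}$ yields $\mu\gamma \geq \overline{z}_{l,j} \geq z_{l,j}$, and Lemma~\ref{lemma_condition} then gives $\nabla\psi(\bm{\alpha} + \beta_{j}\bm{1}_{m} - \bm{c}_{j})_{[l]} = \bm{0}$.

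I do not anticipate a genuine obstacle here; the argument is essentially bookkeeping. The only thing that requires a moment's care is the identity $\|\bm{f}^{+}_{[l]}\|_{2} = z_{l,j}/\gamma$ — making sure the factor of $\gamma$ and the restriction to group $l$ are threaded through correctly between the notation $\bm{f}^{+}$ of Equation~(\ref{eq:gradient}) and the notation $z_{l,j}$ of Definition~\ref{def_upper} — together with the boundary case $z_{l,j}=0$ so that we never divide by zero. Everything else is a direct application of the fact that $[\,\cdot\,]_{+}$ annihilates nonpositive arguments.
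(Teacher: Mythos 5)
Your proof is correct and follows essentially the same route as the paper's: it rewrites \(\|\bm{f}^{+}_{[l]}\|_{2}=z_{l,j}/\gamma\) via \(\bm{f}^{+}_{[l]}=\frac{1}{\gamma}[\bm{f}_{[l]}]_{+}\) and observes that the soft-thresholding factor \([1-\mu\gamma/z_{l,j}]_{+}\) vanishes when \(\mu\gamma\geq z_{l,j}\). Your explicit treatment of the degenerate case \(z_{l,j}=0\) is a small extra care the paper leaves implicit, but otherwise the arguments coincide.
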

\begin{proof}
Suppose that \(\bm{f}=\bm{\alpha} + \beta_{j} \bm{1}_{m} - \bm{c}_{j}\).
From Equation~(\ref{eq:gradient}),
%From Equation~(5),
\(\nabla\psi(\bm{f})_{[l]}=\bm{0}\) when
\(1 - \mu/|\!|\bm{f}_{[l]}^{+}|\!|_{2}\leq 0\) holds.
Since \(\textstyle{\bm{f}_{[l]}^{+} = \frac{1}{\gamma} [\bm{f}_{[l]}]_{+}}\) from the definition in Equation~(\ref{eq:gradient}) and \(|\!|[\bm{f}_{[l]}]_{+}|\!|_{2} = z_{l,j}\),
%Since \(\textstyle{\bm{f}_{[l]}^{+} = \frac{1}{\gamma} [\bm{f}_{[l]}]_{+}}\) from the definition in Equation~(5) and \(|\!|[\bm{f}_{[l]}]_{+}|\!|_{2} = z_{l,j}\),
we obtain the desired inequality.
\qed
\end{proof}
We prove Lemma~\ref{lemma_zero_vectors} by utilizing the above lemma as follows:
%We prove Lemma~2 by utilizing the above lemma as follows:
\begin{proof}
When \(\mu\gamma \geq \overline{z}_{l,j}\) holds,
we have \(\mu\gamma \geq \overline{z}_{l,j}\geq z_{l,j}\) from Lemma~\ref{lemma_upper}.
%we have \(\mu\gamma \geq \overline{z}_{l,j}\geq z_{l,j}\) from Lemma~1.
Therefore, we obtain \(\nabla\psi(\bm{\alpha} + \beta_{j} \bm{1}_{m} - \bm{c}_{j})_{[l]}=\bm{0}\) from Lemma~\ref{lemma_condition}, as
\(\mu\gamma \geq z_{l,j}\) holds.
\qed
\end{proof}

\section{Proof of Lemma~\ref{lemma_cost_upper}}
%\section{Proof of Lemma~3}
\begin{proof}
Suppose that all the groups are of the same size \(g\), for simplicity.
The computations of \(|\!|[\Delta \bm{\alpha}_{[l]}]_{+} |\!|_{2}\) for all the groups and \([\Delta \bm{\beta}]_{+}\) require \(\mathcal{O}(|\mathcal{L}|g)\) and \(\mathcal{O}(n)\) times, respectively.
After the computations, Equation~(\ref{upper}) can be computed for all the elements in \(\overline{Z}\in\mathbb{R}_+^{|\mathcal{L}|\times n}\) in \(\mathcal{O}(|\mathcal{L}|n)\) time.
%After the computations, Equation~(6) can be computed for all the elements in \(\overline{Z}\in\mathbb{R}_+^{|\mathcal{L}|\times n}\) in \(\mathcal{O}(|\mathcal{L}|n)\) time.
Therefore, the total computation cost of Equation~(\ref{upper}) is \(\mathcal{O}(|\mathcal{L}|(n+g))\) time given the snapshots.
%Therefore, the total computation cost of Equation~(6) is \(\mathcal{O}(|\mathcal{L}|(n+g))\) time given the snapshots.
\qed
\end{proof}

\section{Proof of Lemma~\ref{lemma_lower}}
%\section{Proof of Lemma~4}
\begin{proof}
From the definition of \(z_{l,j}\) and the triangle inequality,
we obtain the following equation:
\begin{eqnarray}
\label{lower1}
\!\!\!\!\!\!\!\!\!\!\!\!\!\!\!\!\!\!\!\!\!\!&&z_{l,j}\!=\!|\!|(\bm{\alpha}\!+\!\beta_{j} \bm{1}_{m}\!-\!\bm{c}_{j})_{[l]}\!-\![-(\bm{\alpha}\!+\!\beta_{j} \bm{1}_{m}\!-\!\bm{c}_{j})_{[l]}]_{+}|\!|_{2} \nonumber \\
\!\!\!\!\!\!\!\!\!\!\!\!\!\!\!\!\!\!\!\!\!\!&&\geq \!|\!|(\bm{\alpha}\!+\!\beta_{j} \bm{1}_{m}\!-\!\bm{c}_{j})_{[l]}|\!|_{2}\!-\!|\!|[-(\bm{\alpha}\!+\!\beta_{j} \bm{1}_{m}\!-\!\bm{c}_{j})_{[l]}]_{+}|\!|_{2}.
\end{eqnarray}
Here, we obtain the following inequality by utilizing a similar technique as in the proof of Lemma~\ref{lemma_upper}:
%Here, we obtain the following inequality by utilizing a similar technique as in the proof of Lemma~1:
\begin{eqnarray}
\label{lower2}
|\!|(\bm{\alpha}\!+\!\beta_{j} \bm{1}_{m}\!-\!\bm{c}_{j})_{[l]}|\!|_{2}\geq\tilde{k}_{l,j}\!-\!|\!|\Delta \bm{\alpha}_{[l]} |\!|_{2}\!-\!\sqrt{g_l}|\!|\Delta \beta_{j}|\!|_{2}. 
\end{eqnarray}
In addition, we have the following inequality which is similar to the one in the proof of Lemma~\ref{lemma_upper}:
%In addition, we have the following inequality which is similar to the one in the proof of Lemma~1:
\begin{eqnarray}
\label{lower3}
&&\!\!\!\!\!\!\!\!\!\!\!\!\!\!|\!|[-(\bm{\alpha}\!+\!\beta_{j} \bm{1}_{m}\!-\!\bm{c}_{j})_{[l]}]_{+}|\!|_{2} \nonumber \\
&&\leq\tilde{o}_{l,j}\!+\!|\!|[\Delta \bm{\alpha}_{[l]}]_{-} |\!|_{2}\!+\!\sqrt{g_l}|\!|[\Delta \beta_{j}]_{-}|\!|_{2}.
\end{eqnarray}
Here, we have used \([-(\cdot)]_{+}=-[\cdot]_{-}\).
We obtain the inequality in the lemma by utilizing Equations~(\ref{lower1}), (\ref{lower2}) and (\ref{lower3}).
\qed
\end{proof}

\section{Proof of Lemma~\ref{lemma_nonzero_vectors}}
%\section{Proof of Lemma~5}
To prove Lemma~\ref{lemma_nonzero_vectors},
%To prove Lemma~5,
we introduce the following lemma:
\begin{lemma}
\label{lemma_lower_condition}
When \(\mu\gamma < z_{l,j}\) holds,
we obtain \(\nabla\psi(\bm{\alpha} + \beta_{j} \bm{1}_{m} - \bm{c}_{j})_{[l]}\neq \bm{0}\).
\end{lemma}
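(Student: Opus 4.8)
The plan is to unwind the closed-form expression for the gradient given in Equation~(\ref{eq:gradient}) and show that, under the hypothesis $\mu\gamma < z_{l,j}$, neither of the two mechanisms that could send $\nabla\psi(\cdot)_{[l]}$ to the zero vector is active. Writing $\bm{f} := \bm{\alpha} + \beta_{j}\bm{1}_{m} - \bm{c}_{j}$, Equation~(\ref{eq:gradient}) gives $\nabla\psi(\bm{f})_{[l]} = [1 - \mu/\|\bm{f}_{[l]}^{+}\|_{2}]_{+}\,\bm{f}_{[l]}^{+}$ with $\bm{f}_{[l]}^{+} = \frac{1}{\gamma}[\bm{f}_{[l]}]_{+}$. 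A product of a nonnegative scalar and a vector equals $\bm{0}$ exactly when the scalar vanishes or the vector vanishes, so it suffices to exclude both cases.

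First I would note that, by the definition in Definition~\ref{def_upper}, $z_{l,j} = \|[\bm{f}_{[l]}]_{+}\|_{2}$, hence $\|\bm{f}_{[l]}^{+}\|_{2} = z_{l,j}/\gamma$. Since $\mu\gamma \geq 0$, the hypothesis $\mu\gamma < z_{l,j}$ forces $z_{l,j} > 0$, so $[\bm{f}_{[l]}]_{+} \neq \bm{0}$ and therefore $\bm{f}_{[l]}^{+} \neq \bm{0}$; this rules out the second mechanism. For the first, dividing $\mu\gamma < z_{l,j}$ by $\gamma > 0$ yields $\|\bm{f}_{[l]}^{+}\|_{2} = z_{l,j}/\gamma > \mu$, so $\mu/\|\bm{f}_{[l]}^{+}\|_{2} < 1$, hence $1 - \mu/\|\bm{f}_{[l]}^{+}\|_{2} > 0$ and $[1 - \mu/\|\bm{f}_{[l]}^{+}\|_{2}]_{+} = 1 - \mu/\|\bm{f}_{[l]}^{+}\|_{2} > 0$. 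Both factors being nonzero, their product $\nabla\psi(\bm{f})_{[l]}$ is nonzero, which is the claim.

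There is no real obstacle here: Lemma~\ref{lemma_lower_condition} is precisely the converse of Lemma~\ref{lemma_condition}, and the argument is just bookkeeping with the soft-thresholding formula. The only point worth flagging is the degenerate case $\bm{f}_{[l]}^{+} = \bm{0}$, in which the term $\mu/\|\bm{f}_{[l]}^{+}\|_{2}$ in Equation~(\ref{eq:gradient}) is formally undefined; the hypothesis $\mu\gamma < z_{l,j}$ is exactly what excludes this case, so the expression is meaningful throughout. Finally, Lemma~\ref{lemma_nonzero_vectors} follows at once by chaining with Lemma~\ref{lemma_lower}: $\mu\gamma < \underline{z}_{l,j} \leq z_{l,j}$, so Lemma~\ref{lemma_lower_condition} applies and gives $\nabla\psi(\bm{\alpha} + \beta_{j}\bm{1}_{m} - \bm{c}_{j})_{[l]} \neq \bm{0}$.
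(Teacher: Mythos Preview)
Your proof is correct and follows essentially the same approach as the paper's own argument: both set $\bm{f} := \bm{\alpha} + \beta_{j}\bm{1}_{m} - \bm{c}_{j}$, use $\|\bm{f}_{[l]}^{+}\|_{2} = z_{l,j}/\gamma$, and conclude that $1 - \mu/\|\bm{f}_{[l]}^{+}\|_{2} > 0$ under the hypothesis. Your version is slightly more careful in explicitly ruling out the $\bm{f}_{[l]}^{+} = \bm{0}$ case, which the paper leaves implicit.
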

%The proof is similar to that of Lemma~\ref{lemma_condition}:
\begin{proof}
Suppose that \(\bm{f}=\bm{\alpha} + \beta_{j} \bm{1}_{m} - \bm{c}_{j}\).
From Equation~(\ref{eq:gradient}),
%From Equation~(5),
\(\nabla\psi(\bm{f})_{[l]}\neq \bm{0}\) when
\(1 - \mu/|\!|\bm{f}_{[l]}^{+}|\!|_{2}>0\) holds.
Since \(\textstyle{\bm{f}_{[l]}^{+} = \frac{1}{\gamma} [\bm{f}_{[l]}]_{+}}\) from the definition in Equation~(\ref{eq:gradient}) and \(|\!|[\bm{f}_{[l]}]_{+}|\!|_{2} = z_{l,j}\),
%Since \(\textstyle{\bm{f}_{[l]}^{+} = \frac{1}{\gamma} [\bm{f}_{[l]}]_{+}}\) from the definition in Equation~(5) and \(|\!|[\bm{f}_{[l]}]_{+}|\!|_{2} = z_{l,j}\),
we obtain the desired inequality.
\qed
\end{proof}
%Similarly to the proof of Lemma~\ref{lemma_zero_vectors},
We prove Lemma~\ref{lemma_nonzero_vectors} as follows:
%We prove Lemma~5 as follows:
\begin{proof}
When \(\mu\gamma < \underline{z}_{l,j}\) holds,
we have \(\mu\gamma < \underline{z}_{l,j}\leq z_{l,j}\) from Lemma~\ref{lemma_lower}.
%we have \(\mu\gamma < \underline{z}_{l,j}\leq z_{l,j}\) from Lemma~4.
Therefore, we obtain \(\nabla\psi(\bm{\alpha} + \beta_{j} \bm{1}_{m} - \bm{c}_{j})_{[l]}\neq\bm{0}\) from Lemma~\ref{lemma_lower_condition} , as \(\mu\gamma < z_{l,j}\) holds.
\qed
\end{proof}

\section{Proof of Theorem~\ref{thm_cost}}
%\section{Proof of Theorem~1}
\begin{proof}
From Lemma~\ref{lemma_cost_lower}, the extraction of the set \(\mathbb{N}\) requires \(\mathcal{O}(|\mathcal{L}|(n+g))\) time.
%From Lemma~6, the extraction of the set \(\mathbb{N}\) requires \(\mathcal{O}(|\mathcal{L}|(n+g))\) time.
Since the extraction is performed \(s_{\rm{r}}\) times, the total extraction cost is \(\mathcal{O}(|\mathcal{L}|(n+g)s_{\rm{r}})\) time.
For the total gradient computation cost corresponding to \(\mathbb{N}\), \(\mathcal{O}(gs_{\rm{i}})\) time is required.
In addition, we need \(\mathcal{O}(s_{\rm{n}})\) time as the total computation cost of the upper bounds from the proof of Lemma~\ref{lemma_cost_upper}.
%In addition, we need \(\mathcal{O}(s_{\rm{n}})\) time as the total computation cost of the upper bounds from the proof of Lemma~3.
For the un-skipped gradient computations on line 11 in Algorithm~\ref{alg_grad}, we need \(\mathcal{O}(gs_{\rm{u}})\) time.
%For the un-skipped gradient computations on line 11 in Algorithm~2, we need \(\mathcal{O}(gs_{\rm{u}})\) time.
Furthermore, the updates of the snapshots require \(\mathcal{O}(|\mathcal{L}|ngs_{\rm{r}})\) time.
Since the solver requires \(\mathcal{O}(s_{\rm{s}}\)) other than the gradient computation, the computation cost of Algorithm~\ref{alg_fastot} is  \(\mathcal{O}((|\mathcal{L}|ns_{\rm{r}}+s_{\rm{i}}+s_{\rm{u}})g + s_{\rm{n}} + s_{\rm{s}})\) time.
%Since the solver requires \(\mathcal{O}(s_{\rm{s}}\)) other than the gradient computation, the computation cost of Algorithm~1 is  \(\mathcal{O}((|\mathcal{L}|ns_{\rm{r}}+s_{\rm{i}}+s_{\rm{u}})g + s_{\rm{n}} + s_{\rm{s}})\) time.
\qed
\end{proof}

\section{Proof of Theorem~\ref{thm_result}}
%\section{Proof of Theorem~2}
\begin{proof}
From line 3 of Algorithm~\ref{alg_grad}, the gradient vectors corresponding to the set \(\mathbb{N}\) are exactly computed.
%From line 3 of Algorithm~2, the gradient vectors corresponding to the set \(\mathbb{N}\) are exactly computed.
As for the other gradient vectors, when \(\overline{z}_{l,j}\leq\mu\gamma\) holds on line 8 in Algorithm~\ref{alg_grad}, their computations can be safely skipped, in accordance with Lemma~\ref{lemma_zero_vectors}.
%As for the other gradient vectors, when \(\overline{z}_{l,j}\leq\mu\gamma\) holds on line 8 in Algorithm~2, their computations can be safely skipped, in accordance with Lemma~2.
Since the un-skipped gradient vectors are exactly computed on line 11, all the gradient vectors are exactly computed as in the original algorithm.
%Since the un-skipped gradient vectors are exactly computed on line 11, all the gradient vectors are exactly computed as in the original algorithm.
Therefore, Algorithm~\ref{alg_fastot} converges to the same solution and objective value as those of the original algorithm.
%Therefore, Algorithm~1 converges to the same solution and objective value as those of the original algorithm.
\qed
\end{proof}

\section{Proof of Theorem~\ref{lemma_upper_error}}
%\section{Proof of Theorem~3}
\begin{proof}
If \(\bm{\alpha}\) and \(\bm{\beta}\) reach convergence, \(\tilde{\bm{\alpha}}=\bm{\alpha}\) and \(\tilde{\bm{\beta}}=\bm{\beta}\) hold.
Then, \(\Delta \bm{\alpha}=\bm{0}\), \(\Delta \bm{\beta}=\bm{0}\), and \(\tilde{z}_{l,j}=z_{l,j}\) hold.
Therefore, we obtain \(\overline{z}_{l,j}=z_{l,j}\) from Equation~(\ref{upper}) and \(|\overline{z}_{l,j}-z_{l,j}|=0\), which completes the proof.
%Therefore, we obtain \(\overline{z}_{l,j}=z_{l,j}\) from Equation~(6) and \(|\overline{z}_{l,j}-z_{l,j}|=0\), which completes the proof.
\qed
\end{proof}

\section{Proof of Theorem~\ref{lemma_lower_error}}
%\section{Proof of Theorem~4}
\begin{proof}
Similar to the proof of Theorem~\ref{lemma_upper_error},
%Similar to the proof of Theorem~3,
we obtain \(\underline{z}_{l,j}=|\!|\bm{f}_{[l]}|\!|_{2}-|\!|[\bm{f}_{[l]}]_{-}|\!|_{2}\)
if \(\bm{\alpha}\) and \(\bm{\beta}\) reach convergence.
Since \(z_{l,j}=|\!|[\bm{f}_{[l]}]_{+}|\!|_{2}\), \(|\!|[\bm{f}_{[l]}]_{+}|\!|_{2}\geq |\!|\bm{f}_{[l]}|\!|_{2}-|\!|[\bm{f}_{[l]}]_{-}|\!|_{2}\) holds from Lemma~\ref{lemma_lower}.
%Since \(z_{l,j}=|\!|[\bm{f}_{[l]}]_{+}|\!|_{2}\), \(|\!|[\bm{f}_{[l]}]_{+}|\!|_{2}\geq |\!|\bm{f}_{[l]}|\!|_{2}-|\!|[\bm{f}_{[l]}]_{-}|\!|_{2}\) holds from Lemma~4.
Therefore, \(|z_{l,j}-\underline{z}_{l,j}|=| |\!|[\bm{f}_{[l]}]_{+}|\!|_{2} + |\!|[\bm{f}_{[l]}]_{-}|\!|_{2} - |\!|\bm{f}_{[l]}|\!|_{2} |\) holds, which completes the proof.
\qed
\end{proof}

\section{Proof of Corollary~\ref{coro_lower_error}}
%\section{Proof of Corollary~1}
\begin{proof}
From the triangle inequality, we obtain \(|\!|\bm{f}_{[l]}|\!|_{2} = |\!|[\bm{f}_{[l]}]_{+}+[\bm{f}_{[l]}]_{-}|\!|_{2} \leq|\!|[\bm{f}_{[l]}]_{+}|\!|_{2} + |\!|[\bm{f}_{[l]}]_{-}|\!|_{2}\).
From properties of the triangle inequality, \(|\!|\bm{f}_{[l]}|\!|_{2} = |\!|[\bm{f}_{[l]}]_{+}|\!|_{2} + |\!|[\bm{f}_{[l]}]_{-}|\!|_{2}\) holds for the case of \([\bm{f}_{[l]}]_{+}=\bm{0}\) or \([\bm{f}_{[l]}]_{-}=\bm{0}\).
In this case, \(|z_{l,j}-\underline{z}_{l,j}|=\bm{0}\) holds, which completes the proof.
\qed
\end{proof}

\section{Synthetic Dataset with Controlled Number of Samples per Class}
In the main paper, we evaluated the processing time on the simulated dataset with controlled numbers of class labels and data samples.
In this setting, \(n\), \(m\), and \(|\mathcal{L}|\) increased while the number of samples per class \(g\) was fixed at 10.
Therefore, we also evaluated the processing time gain when \(g\) increased from 10 to 160.
The number of class labels \(|\mathcal{L}|\) was fixed at 10, and we set \(n=m\) and \(m=|\mathcal{L}|g\).
Namely, the numbers of data samples \(n\) and \(m\) increased from 100 to 1,600.
The other settings were the same as the settings in the main paper.
Figure~\ref{gain_per_class} shows the result.
Our method is up to 6.5 times faster than the original method.
This is because our method requires \(\mathcal{O}(|\mathcal{L}|(n+g))\) time for the checking procedure, while the original method requires \(\mathcal{O}(|\mathcal{L}|ng)\) time for computing gradients.
In other words, for \(n\) and \(g\), the cost of our method is represented as their sum, whereas that of the original method is represented as their product.
Since \(n\) and \(g\) increase in this experiment, our method can efficiently solve the problem.
\begin{figure}[t!]
\begin{center}
\includegraphics[viewport = 0.000000 0.000000 720.000000 360.000000, width=\columnwidth]{./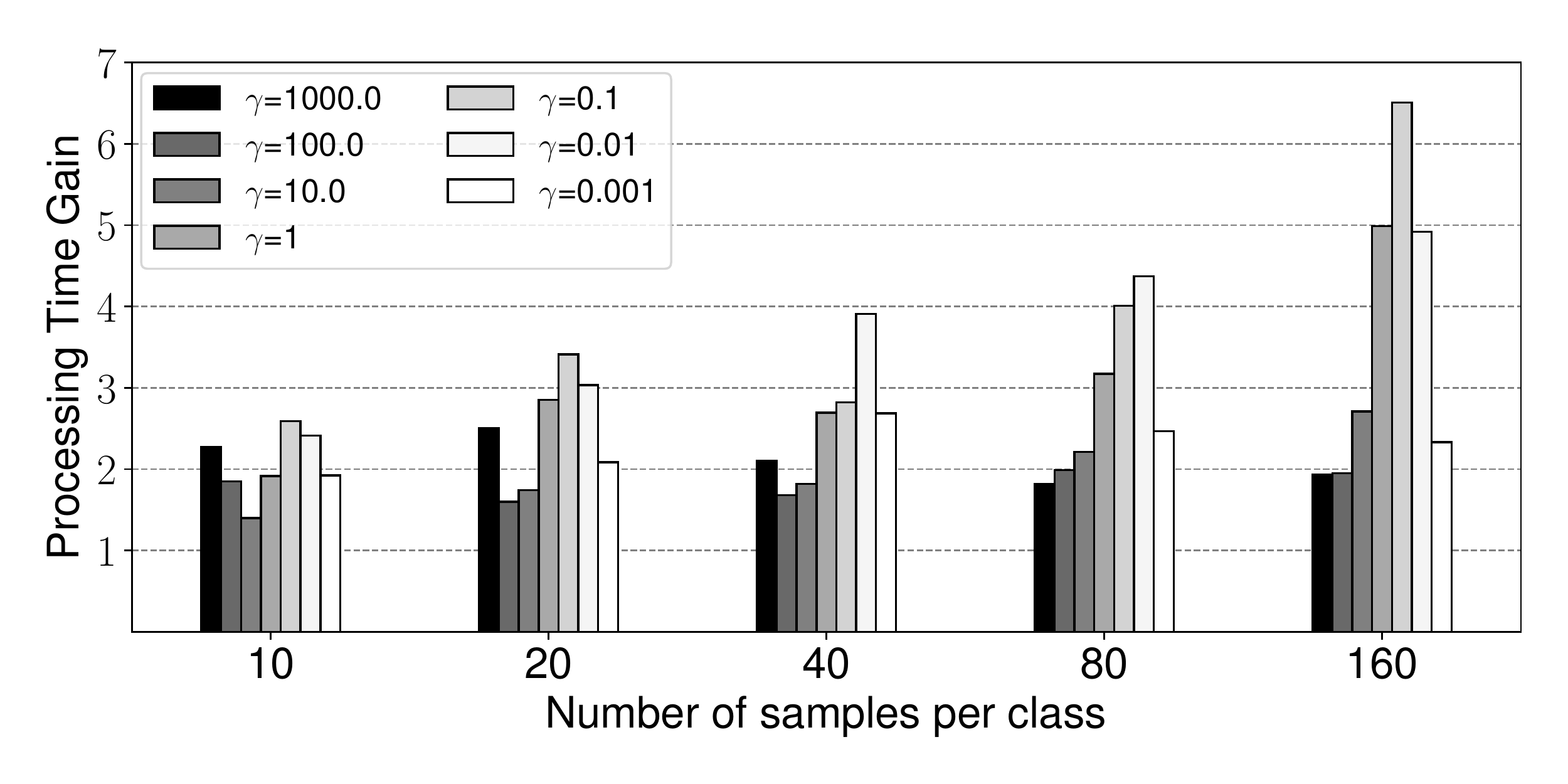} \\
 \caption{Processing time gain for each hyperparameter when the numbers of samples per class change.}
 \label{gain_per_class}
\end{center}
\end{figure}

\section{Convergence of Bounds}
\begin{figure}[t!]
\begin{center}
\includegraphics[viewport = 0.000000 0.000000 720.000000 360.000000, width=\columnwidth]{./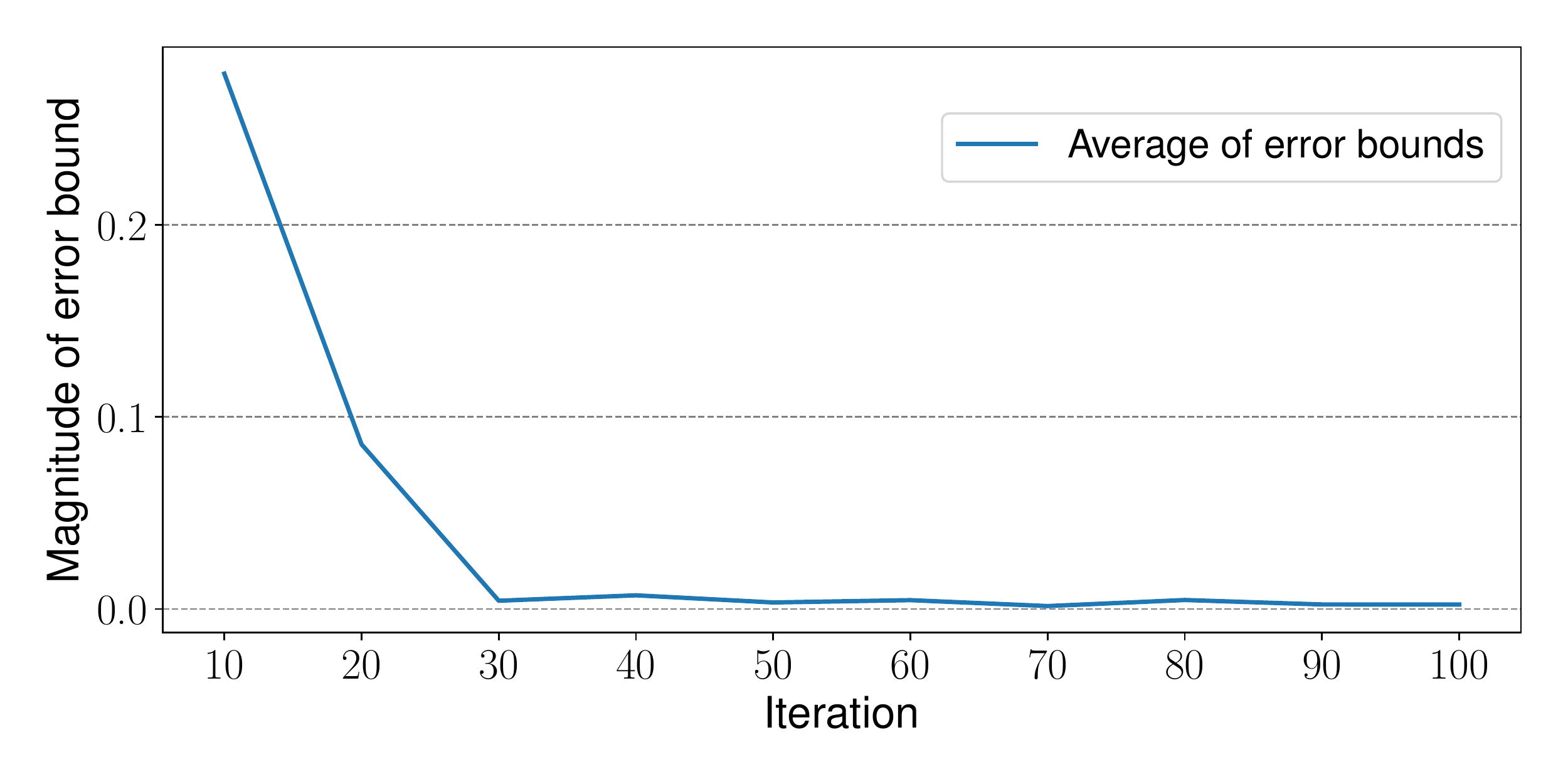} \\
 \caption{Error bounds on MNIST-USPS dataset with \(\gamma=0.1\) and \(\rho=0.8\).}
 \label{error}
\end{center}
\end{figure}
\begin{figure}[t!]
\begin{center}
\includegraphics[viewport = 0.000000 0.000000 720.000000 360.000000, width=\columnwidth]{./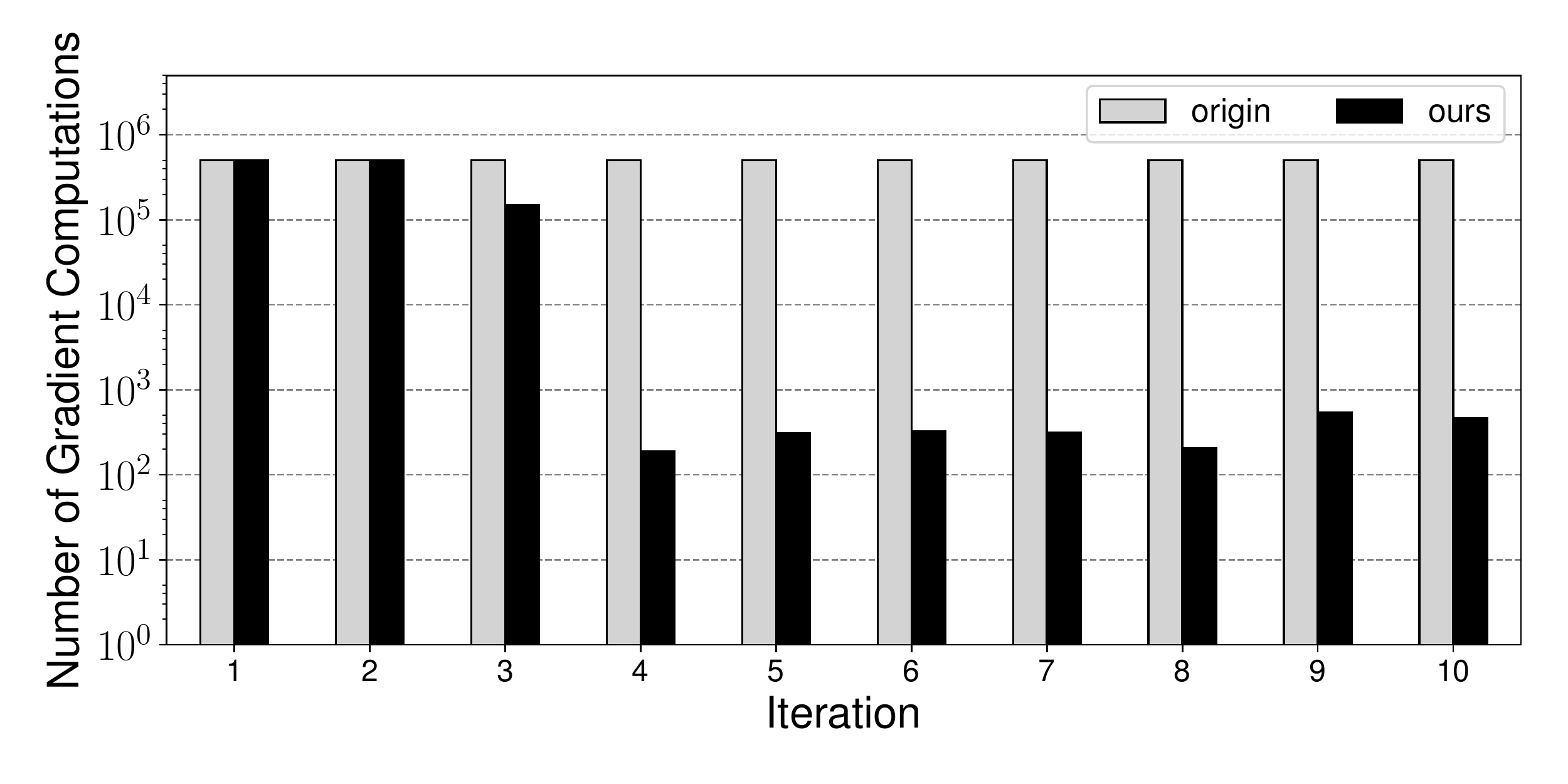} \\
 \caption{Numbers of gradient computations in a log scale for each iteration on MNIST-USPS dataset with \(\gamma=0.1\) and \(\rho=0.8\).}
 \label{count_iter}
\end{center}
\end{figure}
Although Theorem~\ref{lemma_upper_error} shows that the error bound of the upper bound converges to zero,
%Although Theorem~3 shows that the error bound of the upper bound converges to zero,
the error bound is expected to gradually approach zero as the optimization progresses.
This is because \(\Delta \bm{\alpha}\) and \(\Delta \bm{\beta}\) in the upper bound are also expected to approach zero during optimization.
Figure~\ref{error} shows magnitude of error bounds during optimization on MNIST-USPS dataset with \(\gamma=0.1\) and \(\rho=0.8\).
The result suggests that the error bound gradually approaches zero.
%In other words, the upper bounds become gradually tight during optimization, and \(\mu\gamma\geq\overline{z}_{l,j}\) in Lemma~2 will become easy to hold for zero gradients as the optimization progresses.
In other words, the upper bounds become gradually tight during optimization, and \(\mu\gamma\geq\overline{z}_{l,j}\) in Lemma~\ref{lemma_zero_vectors} will become easy to hold for zero gradients as the optimization progresses.

The above discussion also suggests that the efficiency of our method will increase during optimization.
This is because the inequality of \(\mu\gamma\geq\overline{z}_{l,j}\) will become easy to hold for zero gradients as the optimization progresses, and our method will effectively skip gradient computations.
To confirm this hypothesis, we evaluated  the number of gradient computations for each iteration.
Figure~\ref{count_iter} shows the numbers of the first ten iterations in a log scale on MNIST-USPS dataset with \(\gamma=0.1\) and \(\rho=0.8\).
Our method could reduce the number of computations by up to \(0.037\%\),
and skipped more computations as the number of iterations increases.
The result indicates that the upper bound effectively skips the gradient computations as the optimization progresses.

\section{Overhead Reduction with Lower Bound}
%From Figure~2 in the main paper, the checking procedure with upper bounds may become dominant when the numbers of class labels and data samples are small.
From Figure~\ref{gammas_bar}  in the main paper, the checking procedure with upper bounds may become dominant when the numbers of class labels and data samples are small.
Since the aim of our second idea is to reduce the overhead by utilizing lower bounds,
we compared our method with and without lower bounds on the simulated dataset.
The number of class labels was 10, and the other settings were the same as in the main paper.
Figure~\ref{wo_lower} shows the result.
Our method without the second idea is slightly slower than the original method for \(\gamma=0.001\) and \(0.01\).
%This is because \(\mu\gamma\geq\overline{z}_{l,j}\) in Lemma~2 is hard to hold for small values of \(\gamma\),
%This is because \(\mu\gamma\geq\overline{z}_{l,j}\) in Lemma~\ref{lemma_zero_vectors} is hard to hold for small values of \(\gamma\),
%and our method cannot skip gradient computations.
On the other hand, our method with lower bounds turns out to be about twice as fast as the original method for \(\gamma=0.001\) and \(0.01\).
The result suggests the second idea helps to reduce the overhead of the first idea especially for small numbers of class labels and data samples.
\begin{figure}[t!]
\begin{center}
\includegraphics[viewport = 0.000000 0.000000 720.000000 360.000000, width=\columnwidth]{./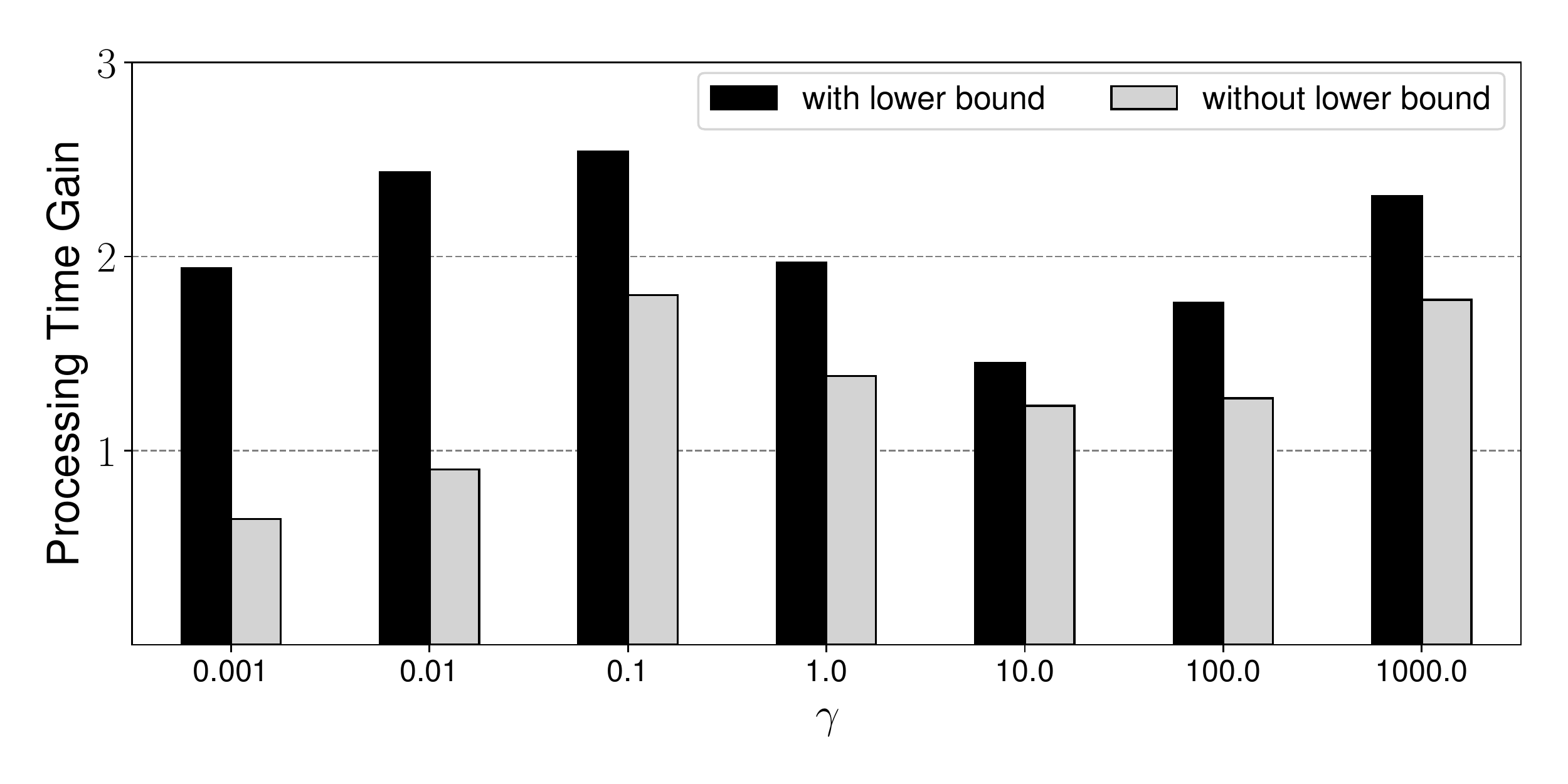} \\
 \caption{Processing time gain for our method with and without lower bounds. The number of class labels is 10.}
 \label{wo_lower}
\end{center}
\end{figure}
\end{document}